\DeclareMathOperator*{\argmax}{arg\,max}
\DeclareMathOperator*{\argmin}{arg\,min}
\newcommand{\tool}{\texttt{BaVerLy}\xspace}
\newcommand{\TBD}[1]{\textcolor{red}{\bf TBD}}
\newcommand{\EXTENDEDVER}{-1} % set to -1 to enable extended , to to 1 to disable extended.
\begin{document}

\title{Mini-Batch Robustness Verification of Deep Neural Networks}

\author{Saar Tzour-Shaday}
\orcid{0009-0005-9616-8944}
\affiliation{%
  \institution{Technion}
  \city{Haifa}
  \country{Israel}
}
\email{saartz@campus.technion.ac.il}

\author{Dana Drachsler-Cohen}
\orcid{0000-0001-6644-5377}
\affiliation{%
  \institution{Technion}
  \city{Haifa}
  \country{Israel}
}
\email{ddana@ee.technion.ac.il}

\begin{abstract}
Neural network image classifiers are ubiquitous in many safety-critical applications. 
However, they are susceptible to adversarial attacks. To understand their robustness to attacks, many local robustness verifiers have been proposed to analyze $\epsilon$-balls of inputs. Yet, existing verifiers introduce a long analysis time or lose too much precision, making them less effective for a large set of inputs. In this work, we propose a new approach to local robustness: \emph{group local robustness} verification. The key idea is to leverage the similarity of the network computations of certain $\epsilon$-balls to reduce the overall analysis time. We propose \tool, a sound and complete verifier that boosts the local robustness verification of a set of $\epsilon$-balls by dynamically constructing and verifying mini-batches.
\tool adaptively identifies successful mini-batch sizes, accordingly constructs mini-batches of $\epsilon$-balls that have similar network computations, and verifies them jointly. 
If a mini-batch is verified, all its $\epsilon$-balls are proven robust. Otherwise, one $\epsilon$-ball is suspected as not being robust, guiding the refinement. 
\tool leverages the analysis results to expedite the analysis of that $\epsilon$-ball as well as the analysis of the mini-batch with the other $\epsilon$-balls. We evaluate \tool on fully connected and convolutional networks for MNIST and CIFAR-10. Results show that \tool scales the common one by one verification by 2.3x on average and up to 4.1x, in which case it reduces the total analysis time from 24 hours to 6 hours. 
\end{abstract} 
\maketitle
\section{Introduction}
Neural networks are successful in many applications, including object detection, speech recognition, text generation and machine translation~\cite{YOLO,SEQ_TO_SEQ,SPEECH_RECOGNITION,MACHINE_TRANSLATION}. 
In particular, they are widely ubiquitous as image classifiers~\cite{ALEXNET}, playing a crucial role in safety-critical applications, such as autonomous cars~\cite{AUTONOMOUS_DRIVING1,AUTONOMOUS_DRIVING2,AUTONOMOUS_DRIVING3}, medical diagnosis~\cite{SKIN_CANCER,LUNG_PATTERN}, and surveillance systems~\cite{SURVEILLANCE1,SURVEILLANCE2}.
Guaranteeing the safety of these networks is imperative in these settings, especially in light of the recent European Regulations on Artificial Intelligence~\cite{eucaiwhitepaper}.

However, neural networks are known to be vulnerable to different kinds of attacks. One of the attacks that has drawn a lot of attention in recent years is adversarial example attacks~\cite{INTRIGUING_PROP,ADVERSARIAL_EXAMPLES,EXPLAINING_ADVERSARIAL_EXAMPLES,ADVERSARIAL_ATTACKS1,ADVERSARIAL_ATTACKS2,ADVERSARIAL_ATTACKS3,ADVERSARIAL_ATTACKS4}. An adversarial attack that targets an image classifier computes a small noise -- typically imperceptible to the human eye -- that leads the network to misclassify. To show the robustness of a neural network to these attacks, many robustness verifiers have been introduced~\cite{RELUPLEX,MIPVERIFY,AI2,DEEPPOLY,DEEPZ,SEMANTIFY_NN,MARABOU,MN_BAB,BETA_CROWN,NNV}. Most of them focus on proving the \emph{local robustness} of a given $L_\infty$ $\epsilon$-ball~\cite{MIPVERIFY,DEEPPOLY,DEEPZ,BETA_CROWN}, though some of them focus on other kinds of perturbations, such as other $L_p$ $\epsilon$-balls~\cite{CALZONE,COVERD,L2_PERTURBATION}, geometric perturbations~\cite{GEOMETRIC_PERTURBATIONS_SHARED_CERTIFICATES,GEOMETRIC_PERTURBATIONS2}, or global robustness~\cite{GLOBAL_ROBUST,VHAGAR}.

Despite the immense research on verifiers for determining the local robustness in a single $\epsilon$-ball, they still face challenges in providing formal guarantees to deep networks:
complete verifiers struggle to scale because of their exponential time complexity, while incomplete verifiers struggle to successfully verify robustness of deep networks because of their precision loss.
Additionally, typically network designers are not interested in the local robustness of a single $\epsilon$-ball. Ideally, they aim at understanding the local robustness in all ``relevant'' $\epsilon$-balls. Since the set of relevant $\epsilon$-balls does not have a formal characterization, it is often estimated as the set of $\epsilon$-balls around inputs in a given test set. Although these sets often contain similar inputs, most local robustness verifiers do not leverage this setting and verify $\epsilon$-balls one by one. An exception is works on shared certificates~\cite{SHARED_CERTIF,FANC}, which learn {verification templates} with the goal of expediting the analysis of unseen $\epsilon$-balls. 
However, they are not designed to directly leverage the given test set to reduce the overall analysis time. Additionally,
the template generation takes several hours and existing shared certification techniques focus on incomplete verification, and consequently they may not expose the true robustness level of a network. 

In this work, we consider the problem of \emph{group local robustness verification}. Given a network, a set of inputs, and a real number $\epsilon$, the goal is to determine for every input's $\epsilon$-ball whether it is robust or not while minimizing the overall analysis time. 
We focus on complete verification, because it enables to understand the robustness level of the network. In particular, it provides a faithful approach to compare the robustness levels of two networks. This problem is challenging since it requires to identify which $\epsilon$-balls can be analyzed together without leading to spurious adversarial examples and without increasing the verification's complexity. 
The latter may happen since the verification's complexity is exponential in the number of non-stable neurons (i.e., neurons for which the activation function exhibits nonlinearity). Generally, unifying $\epsilon$-balls may lead to increasing the number of non-stable neurons. In particular, unwise unification may lead to a significantly higher number of non-stable neurons, thus increasing the verification's complexity and making the overall analysis time longer than analyzing the $\epsilon$-balls one by one. 

To balance between verifying multiple $\epsilon$-balls and avoiding increased complexity as well as precision loss, we propose to verify \emph{mini-batches}. A mini-batch is a small subset of inputs for which the network performs similar computations. 
This concept is inspired by common machine learning training algorithms, which process data in mini-batches to significantly enhance computational efficiency (though their mini-batches need not consist of inputs with similar network computations).
Verifying a mini-batch can be encoded by a mixed-integer linear program (MILP), extending the encoding of a previous local robustness verifier for a single $\epsilon$-ball~\cite{MIPVERIFY}.
However, the naive extension suffers from higher verification's complexity as well as precision loss. To cope, we propose several ideas.
First, we begin the joint verification of a mini-batch in an intermediate layer of the network (like the generated templates of~\citet{SHARED_CERTIF,FANC}). Unifying in an intermediate layer enables to focus on the computations where the $\epsilon$-balls are \emph{perceived similar}, thereby the verification's complexity does not grow significantly and the overapproximation error is low.  
Second, we encode the mini-batch verification such that the MILP solver either determines that the mini-batch is fully verified, or detects an $\epsilon$-ball which may be not robust. This encoding enables a simple refinement: this $\epsilon$-ball is analyzed separately and the other $\epsilon$-balls continue their joint analysis.
Thus, the time spent on the analysis of a mini-batch is not wasted. 
Further, after separating the possibly non-robust $\epsilon$-ball from the mini-batch, its analysis and the analysis of the remaining batch leverage the analysis results of the previous mini-batch to terminate faster.
Third, we estimate the similarity of $\epsilon$-balls by the activation patterns of their center input. This approach is both  fast and, in practice, estimates well closeness of $\epsilon$-balls. Fourth, we learn the optimal mini-batch size throughout the analysis. In particular, it may start with larger mini-batches, consisting of the most similar $\epsilon$-balls and reduce the mini-batch sizes, when the remaining $\epsilon$-balls are further apart. This step relies on multi-armed bandit with the Thompson Sampling.

We implemented our approach in a system called \tool (a \textbf{ba}tch \textbf{ver}ifer for \textbf{l}ocal robustness). We evaluate \tool on fully connected networks and convolutional networks for MNIST and CIFAR-10. \tool boosts the verification time by 2.3x on average and up to 4.1x compared to one by one verification. In particular, it reduces the analysis time from 13 hours to 5 hours, on average.
We further show that learning the optimal mini-batch size boosts \tool by 2.5x. 
\section{Problem Definition}
\label{sec:preliminary}
In this section, we define our problem: group robustness verification. We begin with background on image classifiers and local robustness. We then define our problem and discuss existing approaches.

\paragraph{Image classifiers}
Image classifiers take an input image $x$ and determine which class from a set of classes $\mathcal{C}$ describes the object shown in the image.
For example, a CIFAR-10 classifier maps images to one of ten classes, e.g., a ship or a deer.
An image classifier implemented by a deep neural network (DNN) is a function $N : [0,1]^{d_{in}} \to \mathbb{R}^{d_{out}}$ composed from $L$ hidden layers $N = N_{L} \circ \ldots \circ N_1$.
The input to the first hidden layer, referred to as the input layer, is denoted by $z_0 = x \in [0,1]^{d_{in}}$, while the output of the last hidden layer, known as the output layer, is denoted by $z_L \in \mathbb{R}^{d_{out}}$.
Each hidden layer $N_i$ takes as input the output vector of the previous layer $z_{i-1}$ and returns a vector $z_i$.
To compute the output vector, it first executes an affine transformation $z'_{i} = W_i z_{i-1} + b_i$, where $W_i$ and $b_i$ are the layer's weight matrix and bias vector, respectively. 
This transformation is then followed by a nonlinear activation function.
We focus on piecewise-linear networks, whose predominant activation function is the Rectified Linear Unit (ReLU). The ReLU function, computing $z_i=ReLU(z'_i)$, is invoked component-wise and returns the maximum of each component and zero: $\forall j. \ ({z_{i}})_j = ReLU(({z'_{i}})_j)= \max (({z'_{i}})_j, 0)$.
The output of the last layer $N_L$ contains $d_{out}$ neurons, each returns the score of a unique class $c \in \mathcal{C}$ (where $|\mathcal{C}|=d_{out}$).
The process of passing an input $x \in [0,1]^{d_{in}}$ through the DNN to receiving the output $N(x) \in \mathbb{R}^{d_{out}}$ is called a feed-forward pass.
At the end of this process, the classification for $x$ is the class with the highest score: $c' = \argmax (N(x))$.

\paragraph{Local robustness}
To prove safety to adversarial attacks, many works focus on analyzing the \emph{local robustness} of a network classifier~\cite{AI2, DEEPPOLY, RELUPLEX, GPUPOLY, MIPVERIFY, IMAGESTARS, MARABOU2, BICCOS, NNV}.
The vast majority of works focuses on proving robustness in the $L_\infty$ $\epsilon$-ball of a given input.
Formally, given an input image $x \in \mathbb{R}^{d_{in}}$ and an $\epsilon \in \mathbb{R}^+$, the $L_\infty$ $\epsilon$-ball of $x$ is the set of all inputs that differ from $x$ by at most $\epsilon$, that is
$B_\epsilon^\infty(x) = \{ x' \: | \: \| x - x' \|_{\infty} = \max (| x_1 - x'_1 | , \ldots , | x_{d_{in}} - x'_{d_{in}} |) \leq \epsilon\}$.
A network classifier $N$ is locally robust in $B_\epsilon^\infty(x)$ if it classifies all its inputs the same:
$\forall x' \in B_\epsilon^\infty(x), \: \argmax (N(x)) = \argmax (N(x'))$.
In the following, we say that $N$ is robust in the $\epsilon$-ball (or the neighborhood) of $x$ if $N$ is locally robust in $B_\epsilon^\infty(x)$.
Local robustness has been shown to be NP-hard~\cite{RELUPLEX}, which stems from the nonlinear activation function's computations (e.g., the ReLUs).
Thus, existing local robustness verifiers balance between their precision and scalability.
This gave rise to two approaches: complete and incomplete verifiers.
Complete verifiers guarantee to determine whether an $\epsilon$-ball is robust but suffer from a long runtime, which increases as the network is deeper (i.e., has more layers).
In contrast, incomplete verifiers favor scalability and overapproximate the activation computations to expedite the analysis at the expense of precision loss, i.e., the verifier may fail to prove robustness for some robust $\epsilon$-balls.
Commonly, the deeper the network or the larger the $\epsilon$, the higher the precision loss, and thus the higher failure rate of incomplete verifiers.

%\section{Problem Definition}
\label{sec:problem_def}

\paragraph{Group local robustness}
While many complete verifiers propose ways to scale their analysis, they still struggle to scale.
In this work, we aim to leverage the practical scenario of local robustness: verifying local robustness of a \emph{set} of $\epsilon$-balls.
While, ideally, a network designer wishes to understand the local robustness in every input's $\epsilon$-ball (called global 
robustness), this is much more challenging and existing global robustness verifiers do not scale to the size of networks that local robustness verifiers scale.
Instead, it is common to ``estimate'' the global robustness of the network by evaluating its local robustness in the $\epsilon$-balls of 
 a set of inputs.
While there is no guarantee that the network is locally robust in unseen $\epsilon$-balls, this approach helps designers compare the robustness
of networks to adversarial attacks. 
We next formally define this problem and discuss existing approaches.

\begin{definition}[Group Local Robustness Verification]~\label{def:dataset_verification}
    Given a set of inputs $S\subseteq [0,1]^{d_{in}}$, a classifier $N: [0,1]^{d_{in}}\rightarrow  \mathbb{R}^{d_{out}}$, and
     $\epsilon \in \mathbb{R}^+$, 
    \emph{group local robustness verification} determines for every input $x\in S$ whether $N$ is locally robust 
    in its $\epsilon$-ball $B^\infty_\epsilon(x)$ while minimizing the overall analysis time.
\end{definition}

\paragraph{Existing approaches}
The most common approach to addressing our problem involves designing a verifier that analyzes the local robustness of an
$\epsilon$-ball around an input and invoke it on every input in $S$ one by one.
However, this approach does not leverage the similarity of the network computations, which leads to a long analysis time.
To mitigate this, several studies have proposed reusing analysis computations.
For instance,~\citet{FANC} generate and transform templates that capture symbolic shapes at intermediate network layers, allowing proof computations to be reused across multiple approximate versions of a network.
However, this technique is tailored for proof transfer across similar networks rather than across different inputs.
\citet{SHARED_CERTIF} propose the concept of shared certificates, which
leverages the proofs of certain $\epsilon$-balls to speed up the verification of other $\epsilon$-balls, through a two-step process: offline template generation and inference.
During the template generation, a large set of $\epsilon$-balls of training inputs (e.g., several thousands~\cite{SHARED_CERTIF_ARXIV}) are verified one by one. The intermediate analysis results (e.g., zonotopes or polyhedrons) are attempted to be generalized to \emph{templates} through clustering, convex-hull extension, and other expansion techniques. These templates, encoded in the box or the star domain~\cite{STAR_DOMAIN1,STAR_DOMAIN2}, are subsequently verified using an exact verifier.
At inference, an $\epsilon$-ball begins the analysis and after every layer in which templates were generated, it is checked whether its analysis result is contained in one of the templates.
If yes, the analysis terminates; otherwise, the analysis continues as usual.
While shared certificates have been shown successful, they are coupled to the chosen abstract domain, which limits them to incomplete verification.
Consequently, the network designer may not understand the actual local robustness in the given set of $\epsilon$-balls, which can lead to incorrect conclusion when comparing the robustness of networks to one another.
Additionally, the training time has high overhead (multiple hours). 
Further, the training procedure is invoked once and does not consider the $\epsilon$-balls that are later analyzed.
In all existing approaches, the local robustness analysis is performed $\epsilon$-ball by $\epsilon$-ball, as illustrated in~\Cref{fig::existing_approach}.
While shared certification analysis aims at reducing the joint analysis time, it assumes that future 
unseen $\epsilon$-balls have similar intermediate analysis results as the $\epsilon$-balls of the inputs in the training set.

\begin{figure*}[t]
    \centering
    \includegraphics[width=1\linewidth, trim=0 270 10 0, clip, page=8]{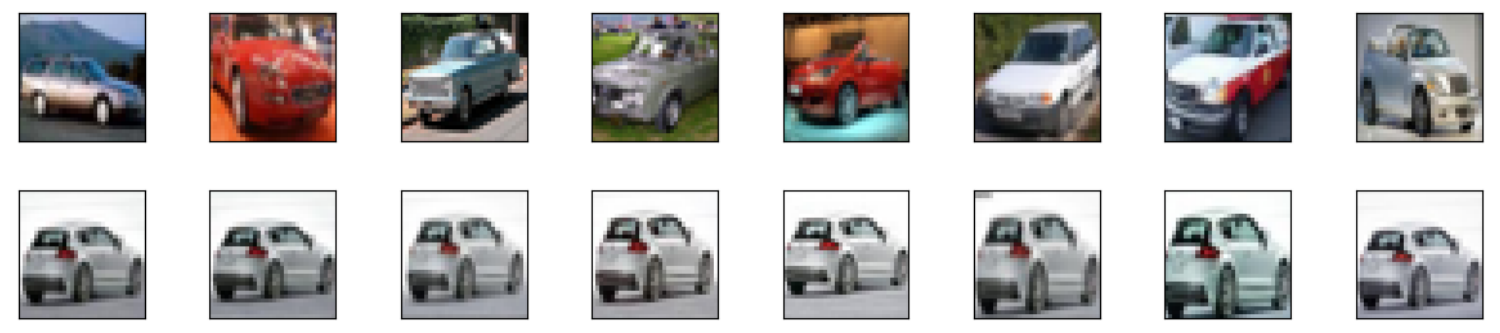} % left bottom right top
    \caption{Existing approaches analyze local robustness for each input's $\epsilon$-ball separately. Complete verifiers determine for each $\epsilon$-ball whether it is robust or not robust.
    Incomplete verifiers may also return unknown.}
    \label{fig::existing_approach}
\end{figure*}

\section{Overview: Mini-Batch Complete Verification}
\label{sec:insight}
In this section, we describe our approach to group local robustness verification: dynamically identifying small subsets of inputs -- called mini-batches -- whose $\epsilon$-balls are likely to be successfully verified together. 
At high-level, our verification relies on an abstraction-refinement procedure, similarly to~\citet{SYMBOLIC_INTERVAL,SYMBOLIC_INTERVAL_LINEAR_RELAXATION,CNN_ABSTRACT_REFINE,REFINE_ZONO}.
However, to minimize the analysis time, there are several inherent questions to address: (1)~what computations are abstracted given a mini-batch,
(2)~how to identify how many and which inputs to include in mini-batches and (3)~how to perform refinement. 
We next discuss our ideas to address these questions.

\begin{figure}[t]
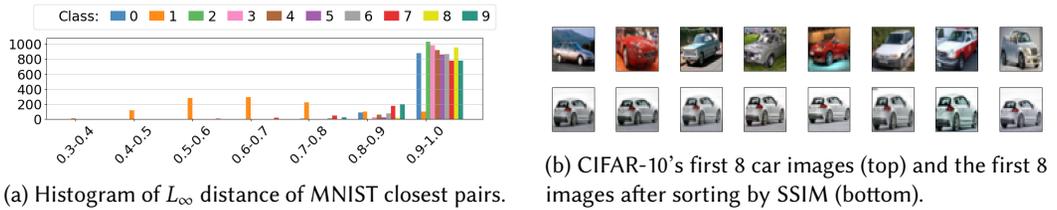

    \centering
    \begin{subfigure}{0.48\linewidth}
        \centering
        \includegraphics[width=\linewidth, trim=0 340 340 0, clip, page=14]{images/figures.pdf} % left bottom right top
        \caption{Histogram of $L_\infty$ distance of MNIST closest pairs.}
        \label{fig::linf}
    \end{subfigure}
        \hspace{1em}
    \begin{subfigure}{0.48\linewidth}
        \centering
        \includegraphics[width=\linewidth, trim=120 370 120 10, clip, page=1]{images/figures.pdf}
        \caption{CIFAR-10's first 8 car images (top) and the first 8 images after sorting by SSIM (bottom).}
        \label{fig::ssim_sort}
    \end{subfigure}
    \caption{Input similarity by common metrics.}
\end{figure}

\subsection{Abstraction}
Given a set of inputs $S$, the most naive approach abstracts all their $\epsilon$-balls. 
However, it is very likely to fail proving robustness, especially if the inputs are classified differently or if the inputs are not very close to one another.
This is caused by two reasons. First, existing local robustness verifiers determine robustness by checking that all inputs in the given neighborhood are classified the same, thus abstracting inputs of different classes will lead the verifier to determine that the abstracted neighborhood is not robust. Second, the more different the inputs, the higher the overapproximation error and the more likely it contains spurious counterexamples, which will lead to failure.

A more natural approach abstracts the $\epsilon$-balls of inputs that are classified the same and are close, based on some similarity metric.
A natural candidate for similarity metric is the $L_\infty$ distance. However, even for the MNIST dataset, whose images are relatively similar, the images are not close enough to keep the overapproximation low. \Cref{fig::linf} shows a histogram of the $L_\infty$ distance of every MNIST test image and its closest image.  
It shows that the closest pair has distance of 0.33 (typically, the radius of the $\epsilon$-ball is much smaller), and that most pairs have significantly larger distance. 
For images, a better similarity metric is the structural similarity (SSIM) index~\cite{SSIM} (illustrated in \Cref{fig::ssim_sort}) or
LPIPS~\cite{LPIPS}. In~\Cref{sec:evaluation}, we show that abstracting in the input layer based on these metrics leads to a large analysis time and fails proving robustness for most $\epsilon$-balls. 
The reason is that despite the similarity, the abstraction still adds too many spurious inputs, which increases the analysis time at best and leads to spurious counterexamples at worst.

Instead, we rely on the following observation: given inputs classified to the same class, abstracting in a deeper layer loses less precision. 
Intuitively, the reason is that the output vectors of such inputs tend to become closer for deeper layers, where ultimately the last output vectors are equivalent in terms of the chosen classification. 
This observation is supported theoretically: the \emph{information bottleneck principle in deep learning}~\cite{BOTTLENECK} states that neural networks compress the input to enhance generalization.
That is, the input layer has a raw representation of the input $x$, where not all pixels contribute to its classification.
As the input is propagated through the network, each layer processes the representation of the previous layer, extracting the meaningful information while discarding irrelevant details. Thus, by abstracting $\epsilon$-balls in an intermediate layer, we can focus on the network computations where they are \emph{perceived similar}, which is more effective to expedite the  analysis. 
Inspired by this observation, 
we join the analysis of the batch's $\epsilon$-balls in an intermediate layer $\ell$.
This observation has also been leveraged in shared certificates~\cite{SHARED_CERTIF}, which are formed by templates in an intermediate layer.
Unlike shared certificates, we do not compute templates with the goal of expediting the analysis of future unseen $\epsilon$-balls, but batch the analysis of subsets of $\epsilon$-balls. \Cref{fig::our_approach} illustrates our approach, called \tool. Given a network, a set of inputs and an $\epsilon$, it iteratively forms batches (we explain how shortly).
For each batch, it verifies each $\epsilon$-ball separately up to layer $\ell$ (the choice of $\ell$ is described in~\Cref{sec:split}).
It then continues their analysis together.  
If the analysis succeeds, all $\epsilon$-balls of the batch are proven robust. If not, \tool identifies an $\epsilon$-ball that may be not robust (we explain how later). It then analyzes this $\epsilon$-ball separately and continues the joint analysis for the remaining batch. We explain later in this section why our refinement steps lead to very low overhead. 

\begin{figure*}[t]
    \centering
    \includegraphics[width=1\linewidth, trim=0 20 0 0, clip, page=9]{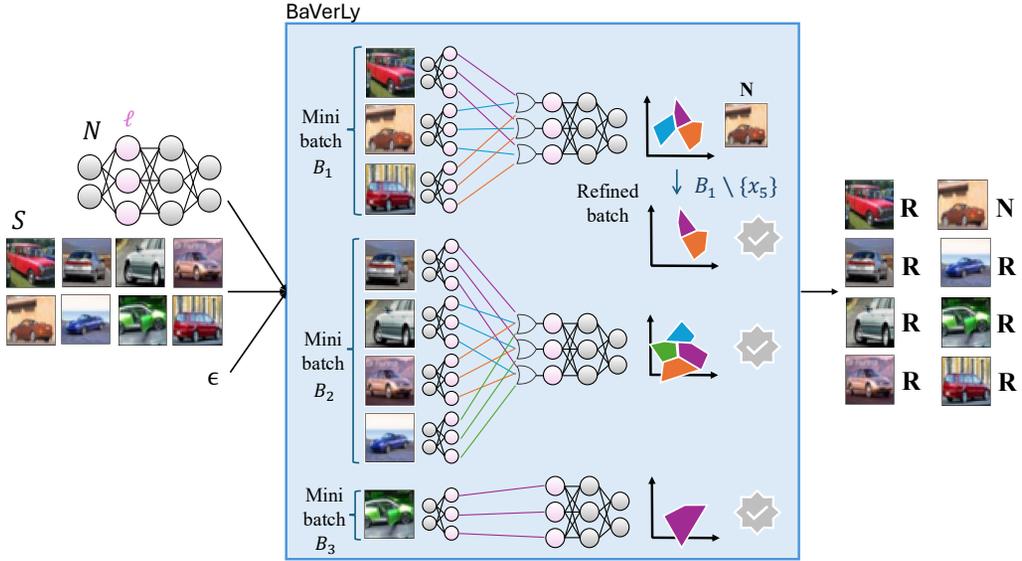} % left bottom right top
    \caption{Our approach for complete group local robustness verification forms mini-batches and analyzes them jointly starting from an intermediate layer. If a mini-batch fails, our verifier identifies a potentially non-robust $\epsilon$-ball and analyzes it separately. It then continues the analysis for the remaining $\epsilon$-balls in the mini-batch.}
    \label{fig::our_approach}
\end{figure*}

\subsection{Mini-Batches} Our second observation is that to balance well the precision-scalability trade-off, it is best to 
abstract to \emph{mini-batches}. That is, unify the verification of relatively small subsets of $\epsilon$-balls. While our algorithm works for any batch size, large batches increase the time overhead substantially because of their overapproximation error. Beyond balancing well the precision-scalability trade-off, there is an additional advantage in verifying mini-batches when the set of inputs $S$ is large: it enables \tool to learn the best mini-batch size based on previous mini-batches. This is possible because the group verification problem is invariant to the order in which $\epsilon$-balls are verified. Naturally, the best batch size depends on the inputs in the batch, thus our batches consist of inputs whose $\epsilon$-balls have similar network computations.
We next describe how \tool learns the best batch size from previous mini-batches and how it clusters inputs into a mini-batch of a selected size.

\paragraph{Learning the mini-batch size}
Choosing a good batch size is challenging. 
On the one hand, too large mini-batches can substantially increase the analysis time due to the overapproximation error. 
On the other hand, too small batches may also increase the analysis time, similarly to verifying the $\epsilon$-balls one by one.
The best batch size also depends on the inputs in $S$: the more similar inputs in $S$, the more effective larger mini-batches are. 
Even if \tool begins by grouping the most similar $\epsilon$-balls to relatively large mini-batches, as the analysis progresses, the remaining $\epsilon$-balls are likely to have more distant network computations, for which smaller mini-batches are more effective. 
We rely on an adaptive approach to learn the optimal batch size on the fly.
Our approach adopts a strategy from reinforcement learning (RL), where an agent learns a policy that maximizes the received reward.
In our setting, the optimal policy's goal is to predict batch sizes that enable \tool to minimize the analysis time per input in $S$.
We formalize this goal as increasing the \emph{batch velocity}, 
that is the number of $\epsilon$-balls which were proven robust within the batch verification, divided by the analysis time of the batch (excluding its refinements).
We note that the concept of partitioning a local robustness task by predicting the subparts that maximize the proof velocity has been proposed by others~\cite{VEEP}, however they focus on verifying the local robustness of a single semantic feature neighborhood.
An inherent dilemma of an RL agent is the \emph{exploration-exploitation trade-off}. In our context, this means that whenever our RL agent chooses a mini-batch size it can choose between exploring new, potentially effective mini-batch sizes (which may be discovered as less effective) or exploiting batch sizes that have been shown to be reasonably effective (which may lead to not discovering more effective batch sizes).
In our setting, this problem becomes even more challenging since the $\epsilon$-balls can vary in the location of their perturbations and in the similarity level of their network computations.
To cope, we frame the problem of predicting the best mini-batch size as \emph{a multi-armed bandit} (MAB) scenario
and rely on the Thompson Sampling~\cite{THOMPSON_SAMPLING} that seamlessly balances reward maximization (exploitation) and variance minimization (exploration).
Technically, we introduce a different arm for each batch size. For each batch size, we learn a distribution that converges to the velocity of batches with this size. The distributions are updated throughout the execution of \tool.
\Cref{fig::batch_size} illustrates our approach for learning the best batch size.

\begin{figure*}[t]
    \centering
    \includegraphics[width=1\linewidth, trim=0 10 0 0, clip, page=10]{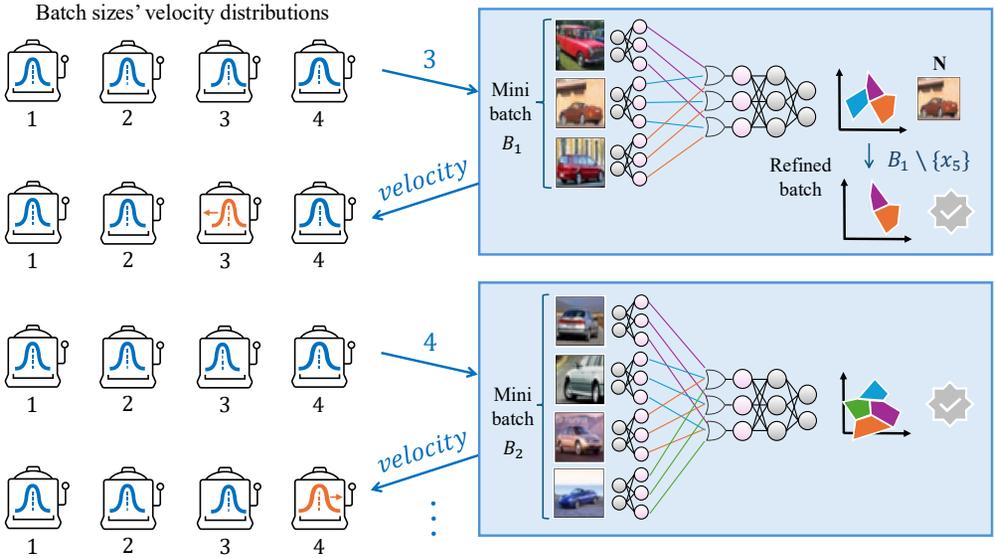} % left bottom right top
    \caption{Our approach for learning the best mini-batch size. We phrase the problem as a multi-armed bandit and learn the velocity distribution of every batch size. At every iteration, \tool samples the mini-batch size by the Thompson Sampling. It then constructs a mini-batch and verifies it. Afterward, the batch velocity is computed and the respective batch size's distribution is updated.}
    \label{fig::batch_size}
\end{figure*}

\begin{figure*}[t]
    \centering
    \includegraphics[width=0.7\linewidth, trim=30 355 350 10, clip, page=11]{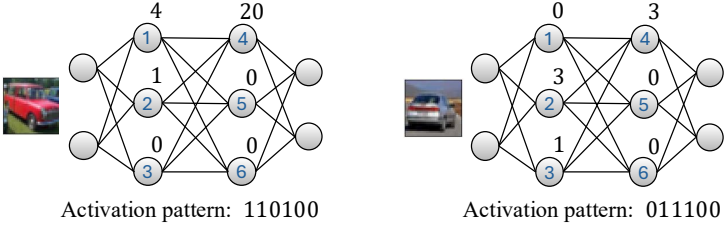} % left bottom right top
    \caption{Given an input, an activation pattern abstracts the computation of the intermediate neurons by a boolean vector whose $i^\text{th}$ entry is one, if the $i^\text{th}$ neuron outputs a positive value, and zero otherwise.}
    \label{fig::act_pat}
\end{figure*}

\paragraph{Constructing effective mini-batches}
The next question is how to construct a mini-batch, given the selected size $k$.
As mentioned, \tool aims at grouping the $k$ inputs whose $\epsilon$-balls exhibit the most similar network computations. This raises two questions: (1)~how to identify the $\epsilon$-balls with the closest network computations and (2)~how to identify $k$ such $\epsilon$-balls? 
The first question is particularly challenging since \tool does not know what the network computations of a given $\epsilon$-ball are without performing its analysis. Obviously, if \tool performed the analysis of every $\epsilon$-ball separately, there would be no point in the mini-batch analysis. Instead, we estimate the closeness of two $\epsilon$-balls by the similarity of the network computations for their center inputs. This is obtained by first running every input in $S$ through the network, which introduces negligible overhead. We then abstract the network computation of every input by its \emph{activation pattern}. An input $x$'s activation pattern is a boolean vector consisting of a bit for every ReLU neuron in the network. A bit is one if the respective neuron is \emph{active} (i.e., positive) when propagating $x$ through the network, and zero otherwise. \Cref{fig::act_pat} exemplifies the activation patterns of two inputs. 
The advantage of relying on the similarity of activation patterns rather than metrics at the input layer is that it estimates the increase in the verification's complexity caused by the unification of $\epsilon$-balls.
At high-level, the verification's complexity is exponential in the number of unstable ReLUs. Given a neighborhood of inputs, the unstable ReLU neurons are the neurons whose weighted sum inputs can be both positive and negative. Namely, these ReLUs can be both active and inactive, making the ReLU computation nonlinear in this neighborhood. The fewer the unstable ReLU neurons, the lower the verification's complexity. 
While the activation pattern of an input does not indicate which neurons are unstable in its $\epsilon$-ball (since it does not consider every possible input in the $\epsilon$-ball), inputs which differ in the activation state of a certain neuron, imply that this neuron must be unstable if we unify their $\epsilon$-balls. For example, consider neuron $1$ in~\Cref{fig::act_pat}. It is in active state for the first image and in inactive state for the second image. If we unify these images' $\epsilon$-balls into one neighborhood, neuron 1 must be unstable. On the other hand, for neurons~$2$ and $6$ in~\Cref{fig::act_pat}, both images have the same state (for both, neuron $2$ is active and neuron $6$ is inactive). Thus, although it may be that these neurons are unstable if we unify these images' $\epsilon$-balls, it may also be that these neurons are stable. 
We note that if a certain neuron has different active/inactive states for two images, it does not necessarily mean that if we unify the two images' $\epsilon$-balls the verification's complexity increases, since it could be that one of their $\epsilon$-balls makes this neuron unstable.
This is our motivation for preferring to unify $\epsilon$-balls whose center inputs have close activation patterns.
We measure the distance of two activation patterns by their Hamming distance (i.e., the number of different bits). For example, the Hamming distance of the two images in~\Cref{fig::act_pat} is $2$, since their first and third bits are different.
Relying on activations to identify network similarities has been proposed in prior work. For example, \citet{INTRIGUING_PROP} show that activation values of neurons in the hidden layers encode semantic information about the features seen in the image.
In particular, inputs that share many common features tend to have close activation patterns.
\citet{ALEXNET} show that the network perceives images as semantically similar when their deeper layer activations are proximal, even when the images' pixels differ substantially.

We now explain how \tool forms a mini-batch of up to $k$ inputs.
A natural idea is to rely on clustering algorithms, such as K-Means~\cite{KMEANS}. However, most clustering algorithms are effective in clustering a set of elements into a certain number of clusters, whereas we are interested in clusters of \emph{given sizes} and the given sizes \emph{change} during the execution of \tool.
We thus rely on \emph{Hierarchical Clustering (H-Cluster)}~\cite{HCLUSTER}. H-Cluster has been proposed in phylogenetics for revealing the evolutionary ancestry between a set of genes, species, or taxa. Given a set of vectors, the H-Cluster greedily constructs a diagrammatic representation of the clusters hierarchy, called a \emph{dendrogram}.
\Cref{fig::dendrogram} shows an example of a dendrogram over six activation patterns, where the number of an internal node is the maximum Hamming distance of the activation patterns in its subtree.  
For example, the Hamming distance between $x_3$ and $x_4$ is $420$ and the distance between $x_5$ and $x_6$ is $477$. The largest Hamming distance between every pair of inputs in $\{x_3,x_4,x_5,x_6\}$ is $595$.
\tool transforms the dendrogram into a binary tree.
Every leaf corresponds to an input $x\in S$ and inner nodes represent clusters consisting of all leaves in their subtrees.
Figure \ref{fig::binary_tree} shows an example of this binary tree, where the numbers in the inner nodes are the size of their cluster.
This binary tree enables \tool to efficiently track the remaining inputs to verify and construct batches of given sizes. 
To construct a batch of size up to $k$, it traverses the tree in pre-order, stopping at the first node whose number is less than or equal to $k$.
Then, it forms the batch by collecting the leaves and removes this subtree.
The search time complexity is $O(\log{|S|})$ on average and $O(|S|)$ in the worst case (since the binary tree can be unbalanced).

\begin{figure*}[t]
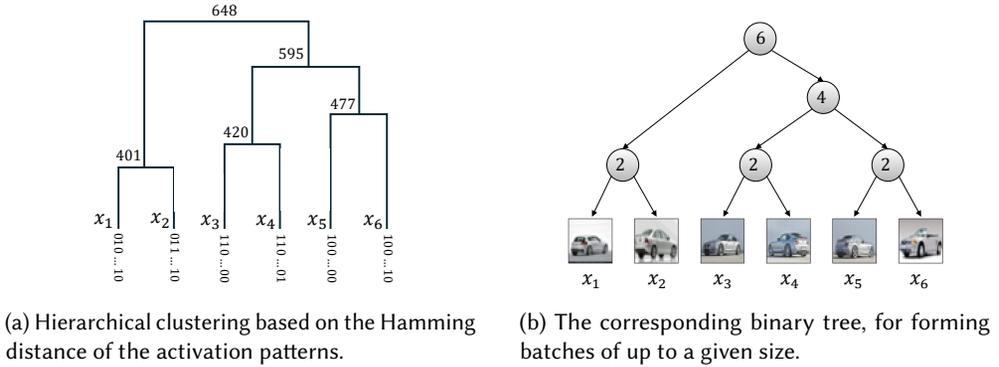

    \centering
    \begin{subfigure}[t]{0.45\textwidth}
        \centering
        \includegraphics[width=0.65\linewidth, trim=0 310 720 0, clip, page=3]{images/figures.pdf} % left bottom right top
        \caption{Hierarchical clustering based on the Hamming distance of the activation patterns.}
        \label{fig::dendrogram}
    \end{subfigure}%
    \hspace{0.5cm}
    \begin{subfigure}[t]{0.45\textwidth}
        \centering
        \includegraphics[width=0.8\linewidth, trim=0 270 580 0, clip, page=2]{images/figures.pdf} % left bottom right top
        \caption{The corresponding binary tree, for forming batches of up to a given size.}
        \label{fig::binary_tree}
    \end{subfigure}
    \caption{Illustration of the H-Cluster algorithm.}
    \label{fig::hcluster}
\end{figure*}

\paragraph{Refinement}
We next discuss what \tool does if the verification of a mini-batch finds a counterexample. 
We remind that our mini-batch verification analyzes every $\epsilon$-ball of the mini-batch separately until some layer
$\ell$ and then analyzes the $\epsilon$-balls jointly till the output layer.
A straightforward encoding of a mini-batch abstracts the $\epsilon$-balls' outputs at layer $\ell$ into the bounding box containing all these outputs.
However, this increases the input space of layer $\ell+1$ significantly, possibly including spurious adversarial examples, which will fail the verification. Even if the verifier finds a true adversarial example, it can require time to determine this is the case. Instead, we propose to encode the \emph{union} of the $\epsilon$-balls' outputs at layer~$\ell$. Technically, we define MILP constraints capturing a disjunction that restricts the inputs to layer $\ell+1$ to exactly the $\epsilon$-balls' outputs at layer~$\ell$. Our constraints associate a binary variable for each $\epsilon$-ball of the batch. If an adversarial example is found, one of these binary variables is one, indicating that the adversarial example belongs to the respective $\epsilon$-ball. Because the mini-batch analysis begins from an intermediate layer (and not the first layer), this adversarial example may be spurious.
Thus, \tool excludes the $\epsilon$-ball that may not be robust and analyzes it separately from the first layer to the last one. 
Accordingly, it determines whether it is robust or not. 
Then, \tool continues the analysis for the remaining mini-batch.
An advantage of our refinement step is that neither the analysis of the $\epsilon$-ball that may not be robust 
nor the analysis of the remaining mini-batch begin from scratch.
Both use the previous analysis computations to prune their search space. These two ideas enable our refinement step to introduce very low overhead.

\section{\tool: A Batch Verifier for Local Robustness} 
\label{sec:overview}

In this section, we present our group local robustness verifier.
We describe its algorithm and then its components.
\ifthenelse{\EXTENDEDVER<0}{\Cref{sec:runex}}{\citet[Appendix A]{Tzour25}} provides a running example.

\subsection{\tool's Algorithm}
\begin{algorithm}[t]
    \caption{\tool ($N$, $S$, $c$, $\epsilon$)}
    \label{algo::batch_verif}
    \DontPrintSemicolon
    \KwIn{A neural network $N$, a set of inputs $S$, a class $c$, and an epsilon $\epsilon\geq0$.}
    \KwOut{A dictionary $\texttt{is\_robust}$, reporting \emph{Robust} or \emph{Non-robust} for every $x \in S$.}
    $\text{is\_robust} = []$ \tcp*{Dictionary mapping inputs to \emph{Robust} or \emph{Non-Robust}}
    $\text{AP} = []$ \tcp*{Dictionary mapping inputs to activation patterns}
    \For{$x\in S$}{\label{line::filterb}
        \lIf{$\argmax N(x) \neq c$}{$\text{is\_robust}[x]$ = \emph{Non-Robust};  $S = S \setminus \{x\}$}

    }\label{line::filtere}
    $\ell$, $S$, is\_robust = learnSplitLayer($N$, $S$, $c$, $\epsilon$, is\_robust) \; \label{line::learn_split_call}
    
    \lFor{$x\in S$}{\label{line::apb}
        $\text{AP}[x]$ = activatation\_pattern($N$, $x$)  \label{line::ape}
    }
    $\mathcal{T}$ = getBinaryTree(H-Cluster($\text{AP}$)) \label{line::hclust} \;
    
    $\mathcal{MAB}$ = initialize(MAX\_BATCH\_SIZE, $\rho$, {BUCKET\_SIZE}) \label{line::init_mab} \;
    \While{$\mathcal{T}\neq \bot$} { \label{line::main_loop}
        $k$ = $\mathcal{MAB}$.getMiniBatchSize() \label{line::recommend} \;
        $B$ = constructBatch($\mathcal{T}$, $k$) \label{line::search_batch} \;
        $k$ = $|B|$ \label{line::actual_batch} \tcp*{Actual batch size}
        $\text{bounds} = []$ \tcp*{Dictionary mapping inputs to theirs bounds up to $N_\ell$}\label{line::prefix_boundsb}
      $\text{start\_time} = \text{current\_time}()$\label{line::start_time} \;
        \For (\tcp*[f]{Compute bounds up to $N_\ell$ for each input}){$x \in B$} {
            $\text{bounds}[x]$ = \text{MILPBounds}($N_\ell\circ\ldots \circ N_1$, $x$, $\epsilon$) \label{line::prefix_boundse} \;
        }
        
        $\text{MILP\_SUF}$ = \text{MILPBatch}($N_L\circ \ldots \circ N_{\ell+1}$, $\{\text{bounds}[x][\ell]\mid x\in B\}$, $c$)\; \label{line::verify_batch_begin}
        $\text{cex}$ = \text{MILPSolve}($\text{MILP\_SUF}$)   \tcp*{Verify the batch}    \label{line::verify_batch_mid} 
        $\text{total\_time} = \text{current\_time}() - \text{start\_time}$ \label{line::up_time} \;
        
        \While{$\text{cex} \neq \bot$}{     \label{line::verify_batch} 
            Let $x$ be the input whose variable $I_x$ is $1$ in $\text{cex}$\tcp*{$B_\epsilon^\infty(x)$ may be non-robust}\label{line:detect}
            $\text{cex}$ = \text{MIPVerify}($N$, $x$, $\epsilon$, $\text{bounds}[x]$) \label{line::refine} \tcp*{Verify $B_\epsilon^\infty(x)$}
            $\text{is\_robust}[x]$= $\text{cex} == \bot$? Robust : Non-Robust \;\label{line::non_robust}
            $B = B \setminus \{x\}$\tcp*{Update the batch}\label{line::upbatch}
            \lIf{$B==\emptyset$}{break}\label{line::break}
             $\text{start\_time} = \text{current\_time}()$\label{line::start_time2} \;
            $\text{MILP\_SUF}$ = $\text{MILP\_SUF}::\{I_x = 0\}$\tcp*{Ignore the $\epsilon$-ball of $x$}\label{line:addcons}
            $\text{cex}$ = \text{MILPSolve}($\text{MILP\_SUF}$)\tcp*{continue with the rest}\label{line:addconssolve}
            $\text{total\_time} = \text{total\_time} + (\text{current\_time}() - \text{start\_time})$ \label{line::up_time2} \;
        }
        \lFor(\tcp*[f]{All $\epsilon$-balls are robust}){$x \in B$}{ 
        $\text{is\_robust}[x]$= Robust \label{line::robust_b}
        }\label{line::robust_e}
        $\mathcal{MAB}[k]$.updateDistribution($\frac{|B|}{\text{total\_time}}$) \label{line::velocity_update} \tcp*{Update based on the velocity}
    }
    \Return{$\text{is\_robust}$}
\end{algorithm}

\tool takes a network classifier $N$, a set of inputs $S$, a class $c$, and an $\epsilon$.
It returns a dictionary \texttt{is\_robust} mapping every input in $x\in S$ to \emph{Robust}, if $N$ classifies all inputs in the $\epsilon$-ball $B_\epsilon^\infty(x)$ as~$c$, or to \emph{Non-Robust} otherwise.
It begins by passing each $x \in S$ through $N$ (Lines \ref{line::filterb}--\ref{line::filtere}). If $x$ is not classified as $c$ by $N$, \tool tags it as \emph{Non-Robust} and removes it from $S$. 
Then, \tool chooses the split layer $\ell$ by calling \texttt{learnSplitLayer} (\Cref{line::learn_split_call}), defined in~\Cref{sec:split}. At high-level, \texttt{learnSplitLayer} verifies $L-1$ $\epsilon$-balls of random inputs from $S$, where each verification splits at a different layer. Accordingly, it chooses for $\ell$ the layer with the minimal verification time.   
It removes the analyzed inputs from $S$ and records their  status in \texttt{is\_robust}.
Then, \tool computes the activation patterns (defined in~\Cref{sec:cluster}) for every input in $S$  and adds it to a dictionary \texttt{AP} (\Cref{line::apb}).
Then, it constructs the clusters' dendrogram and its binary tree $\mathcal{T}$ (\Cref{line::hclust}), described in~\Cref{sec:cluster}, storing the unhandled inputs.
It then initializes the multi-armed bandit agent $\mathcal{MAB}$ (\Cref{line::init_mab}), which learns the optimal mini-batch size (described in~\Cref{sec:mab}).
$\mathcal{MAB}$ is initialized with the maximal mini-batch size \texttt{MAX\_BATCH\_SIZE} (a hyper-parameter) and 
other arguments, described later. 

Then, the main loop runs while there are inputs in the tree $\mathcal{T}$ (\Cref{line::main_loop}). At each iteration, $\mathcal{MAB}$ recommends a batch size $k$ (\Cref{line::recommend}).
\tool then searches $\mathcal{T}$ for a mini-batch $B$ of up to size $k$, constructs it (\Cref{line::search_batch}) and updates $k$ to be the actual batch size (\Cref{line::actual_batch}). For every input $x$ in the mini-batch, \tool analyzes its $\epsilon$-ball separately up to layer $\ell$ (Lines~\ref{line::prefix_boundsb}--\ref{line::prefix_boundse}).
This analysis computes for every $B_\epsilon^\infty(x)$ and every layer real-valued bounds, using  MILPs (described in~\Cref{sec:milp}).
The bounds are stored in a dictionary \texttt{bounds} mapping input to its bounds, i.e., \texttt{bounds[$x$]} 
is a list of lists, where the $i^\text{th}$ list contains the bounds of the neurons in layer $N_i$.

Next, \tool verifies the mini-batch. It begins by encoding the mini-batch verification problem over all inputs in $B$ from layer $N_{\ell+1}$ to the output layer (\Cref{line::verify_batch_begin}), where 
the input space to layer $N_{\ell+1}$ is the union of the bounds of layer $N_\ell$ over all inputs. 
We describe the encoding in~\Cref{sec:milp}.
At high-level, it consists of constraints such that if they are satisfiable, there is a vector $v$, which is not classified as $c$, contained within the bounds of $N_\ell$ of some input $x\in B$. This vector is possibly an indication to an adversarial example within $B_\epsilon^\infty(x)$. This is the case if there is $x'\in B_\epsilon^\infty(x)$ such that $v$ is the output of layer $N_\ell$ for $x'$, i.e., $v=N_\ell\circ\ldots \circ N_1(x')$. Because the bounds provide an overapproximation, it can happen that there is no such $x'$ in $B_\epsilon^\infty(x)$.
Our encoding of the union relies on a binary variable $I_x$, for each $x\in B$, and on constraints that guarantee that if the MILP is satisfiable, exactly one $I_x$ is equal to $1$. 
If $I_x=1$ for $x\in B$, then the satisfying assignment includes a vector $v$ within the bounds of $N_\ell$ for $B_\epsilon^\infty(x)$ that is not classified as $c$. This union encoding enables \tool to not lose precision at the layer $N_\ell$ and identify \emph{which} input from $B$ \emph{may} be non-robust.
Our MILP encoding is submitted to a solver, which returns an assignment \texttt{cex} or $\bot$ (\Cref{line::verify_batch_mid}).

If the MILP solver finds an assignment \texttt{cex}, \tool begins a loop to refine the batch while there exists a counterexample (\Cref{line::verify_batch}). 
At each iteration, it first identifies the input $x$ whose bounds contain the counterexample $v$ (\Cref{line:detect}), i.e., the input $x$ whose binary $I_x$ is equal to one. 
Then, it verifies the local robustness of $N$ in $x$'s $\epsilon$-ball separately (\Cref{line::refine}). This analysis is identical to MIPVerify~\cite{MIPVERIFY}, on which our MILP encoding builds, except that \tool leverages the prior analysis and provides it with the bounds for all layers up to $\ell$ to expedite the analysis (explained in~\Cref{sec:milp}). This analysis is complete, and thus \tool concludes whether $N$ is robust in $x$'s $\epsilon$-ball, depending on whether the MILP solver finds a counterexample (\Cref{line::non_robust}). Then, \tool removes $x$ from the batch $B$ 
(\Cref{line::upbatch}). If $B$ is empty, it breaks from the inner loop (\Cref{line::break}).
Otherwise, \tool updates the MILP to ignore $x$'s bounds by forcing $I_x=0$ (\Cref{line:addcons}) and calls the solver to check if there is another counterexample (\Cref{line:addconssolve}). Note that this analysis continues from the point that the solver terminates and does not start from scratch the computation. 

The inner loop terminates when there is no counterexample, implying that all inputs in $B$ are robust. Thus, \tool updates their robustness status in \texttt{is\_robust}  (\Cref{line::robust_b}). It then updates the $\mathcal{MAB}$ agent with the velocity of this batch (\Cref{line::velocity_update}). 
The velocity is the number of $\epsilon$-balls proven robust as part of the batch (i.e., the size of $B$ at the end of the inner loop) divided by the overall analysis time of the batch. This analysis time is the total time of (1)~computing the bounds of all inputs in the initial $B$ up to layer $\ell$, (2)~computing the bounds of the batch starting from layer $\ell+1$, and (3)~looking for counterexamples in the batch.
This time excludes the time to prove robustness of $\epsilon$-balls suspected as not robust (\Cref{line::refine}), since they are not proven as part of the batch.

\paragraph{Beyond group verification}
While we focus on group local robustness verification, we believe our approach can expedite the verification of other properties. 
It is applicable to any safety property whose input space can be partitioned into subspaces.
For example, semantic feature neighborhoods often contain a large set of inputs that cannot be analyzed jointly and are split statically or dynamically (e.g.,~\cite{DEEPPOLY,VEEP}).
For such neighborhoods, \tool can be extended to get as input the subspaces (explicitly or symbolically). Then, it constructs mini-batches and verifies them as described. 
If a mini-batch is not robust, our union encoding (relying on the binary variables $I_x$) enables \tool to identify the subspace that may not be robust, analyze it separately, and continue verify the rest of the mini-batch. A key advantage of \tool is that continuing verifying a mini-batch, after removing a subspace (by setting its binary variable $I_x=0$), completes very fast, since it relies on the analysis of the original mini-batch.

\subsection{Batch Local Robustness Verification by Mixed Integer Linear Programming}\label{sec:milp}
In this section, we present how we rely on mixed-integer linear programming (MILP) for batch local robustness verification.
\tool relies on three MILPs: for computing bounds on the neurons in the early layers, for verifying a batch in the subsequent layers and for verifying the robustness of $\epsilon$-balls suspected as non-robust. Our MILPs rely on a prior encoding of local robustness verification of a single $\epsilon$-ball~\cite{MIPVERIFY}.
We next describe it and our encodings.
  
\paragraph{MIPVerify} MIPVerify~\cite{MIPVERIFY} is a verifier for determining the local robustness of a neural network using MILP.
It takes a network classifier $N$, an input $x\in [0,1]^{d_{in}}$ and its class~$c$, and an $\epsilon>0$.
It determines whether $N$ is robust in the $L_\infty$ $\epsilon$-ball of $x$ or not, in which case it returns an adversarial example. MIPVerify is sound and complete. Its complexity is exponential in the number of unstable ReLU neurons.
It can analyze classifiers with fully connected, convolutional, and max-pooling layers. We describe its constraints for fully connected layers, but our implementation supports the other layers.
MIPVerify begins by encoding the input layer's neurons with variables $z_{0,1},\ldots,z_{0,{d_{in}}}$ and enforcing the neighborhood with linear constraints:
$\forall m\in [d_{in}].\ z_{0,m} \geq \max(0,x_m-\epsilon)\land z_{0,m} \leq \min(1,x_m+\epsilon)$.
Then, it constructs the constraints capturing the network computations layer by layer.
For every layer $i$ with $m_i$ neurons, it adds $m_i$ linear constraints for capturing the affine computation:
$z'_{i} = W_i z_{i-1} + b_i$. Technically, the vector $z'_{i}$ is captured by $m_i$ variables $z'_{i,1}$,\ldots,$z'_{i,m_i}$.
Then, it computes real-valued lower and upper bounds $l_{i,m},u_{i,m}$  for $z'_{i,m}$ via optimization. 
This is computed by taking the constraints of all layers up to layer $i-1$ and for each neuron $m\in[m_i]$, solving two optimization problems (unless skipped by heuristics), one with objective $u_{i,m}=\max{z'_{i,m}}$ and the other one with objective $l_{i,m}=\min{z'_{i,m}}$.
Then, if $l_{i,m} \geq 0$, the neuron is \emph{active}, i.e., its function is the identity function: 
$z_{i,m}=z'_{i,m}$.
Similarly, if $u_{i,m} \leq 0$, the neuron is \emph{inactive}, i.e., its function is the constant 0: $z_{i,m}=0$.
Otherwise, $l_{i,m} < 0 \wedge u_{i,m} > 0$, the neuron is \emph{unstable}, i.e., its function is piecewise linear and thus it is not expressible as a single linear constraint.
To encode the ReLU computation precisely, MIPVerify introduces a binary variable $a_{i,m}$ that captures the two possible
 states and adds four constraints over $a_{i,m}$, $z'_{i,m}$ and the bounds $l_{i,m}, u_{i,m}$.
After generating the constraints of all layers (the output layer has no ReLUs but its bounds are computed), MIPVerify adds an objective function and a constraint whose goal is to find the minimum adversarial perturbation that is misclassified:
$\min_{x'} \Vert x - x'\Vert_\infty \quad \text{s.t.} \quad z_{L,c} \leq \max_{c' \neq c} z_{L,c'} $, where $x'=(z_{0,1},\ldots,z_{0,{d_{in}}})$ and $x=(x_1,\ldots,x_{d_{in}})$.
It then submits all constraints to a MILP solver.
If the MILP solver determines that the set of constraints is infeasible, then $N$ is robust in this $\epsilon$-ball.
If it finds a satisfying assignment, the values  $z_{0,1},\ldots,z_{0,d_{in}}$ form an adversarial example. 

\paragraph{\tool's MILPs} \tool relies on this MILP encoding for three tasks: 
(1)~for computing the bounds up to layer $\ell$ (Lines~\ref{line::prefix_boundsb}--\ref{line::prefix_boundse}),    
(2)~for batch verification (\Cref{line::verify_batch_begin}, \Cref{line:addcons}), and (3)~for verifying the local robustness of an $\epsilon$-ball suspected as not robust (\Cref{line::refine}). We next describe these MILPs.

\paragraph{Bound computation}
\Cref{algo::milpbounds} shows the bound computation for every neuron in the first $\ell$ layers of $N$, given the $\epsilon$-ball of an input $x$.
It first adds constraints bounding each input neuron within its interval, based on $x$ and $\epsilon$ (\Cref{line:init}).  
Then, for every layer $i$, it iterates the neurons and, for each, computes a lower and an upper bound on the affine function (Lines~\ref{line:lower}--\ref{line:upper}) by calling a MILP solver with all current constraints. Afterwards, it adds the layer's constraints  (similar to~\Cref{batch_encoding_4,batch_encoding_6}, but with respect to $l_{i}$, $u_{i}$ instead of $L_{i}$,  $U_{i}$) and continues to the next layer. 
 
 \begin{algorithm}[t]
    \caption{MILPBounds($N_\ell\circ \ldots \circ N_1$, $x$, $\epsilon$)}
    \label{algo::milpbounds}
    \DontPrintSemicolon
    $l=[]$; $u=[]$\;
    constraints = $\{\max(0,x_i-\epsilon)\leq z_{0,i}\leq \min(1,x_i+\epsilon) \mid i\in [d_{in}]\}$ \;\label{line:init}
    \For{$i=1; i\leq\ell; i++$}{
        
        \For{$m=1; m<m_i; m++$}{
        $l_{i,m}$ 
            = MILPSolve($\min b_{i,m} + \sum_{m' = 1}^{m_{i-1}} w_{i,m,m'} \cdot z_{i-1,m'}$ subject to constraints)\;\label{line:lower}
            $u_{i,m}$
            = MILPSolve($\max b_{i,m} + \sum_{m' = 1}^{m_{i-1}} w_{i,m,m'} \cdot z_{i-1,m'}$ subject to constraints)\;\label{line:upper}
            }
                constraints = constraints $\cup$ MIPVerify\_constraints($N_i$, $l_{i}$, $u_{i}$)\;\label{line:mipvr}
    
        }
    \Return{(l,u)}
\end{algorithm}

\sloppy
\paragraph{Batch verification}
We next describe how \tool forms a MILP for batch verification over the layers $N_{\ell+1},\ldots,N_L$. Given a batch $B$ and the bounds of $N_\ell$ for all inputs in $B$, \tool first defines the input space of $N_{\ell+1}$ as the union of the outputs of layer $N_\ell$. 
Then, it computes the real-valued bounds for every layer from $N_{\ell+1}$ to the output layer. 
Accordingly, it computes the same constraints as MIPVerify for these layers and adds the same constraint to look for an adversarial example. 
As in MIPVerify, if this MILP is infeasible, then there is no adversarial example, implying that \emph{all $\epsilon$-balls} of the inputs in the batch are robust. 
Otherwise,  some $\epsilon$-ball may be not robust. 

We next present our encoding for the inputs to $N_{\ell+1}$, which provides a simple way to identify the input in the batch whose $\epsilon$-ball may be non-robust.
The input to $N_{\ell+1}$ is the output of layer $N_\ell$.
A straightforward encoding is to bound each of its outputs in its minimal containing interval, i.e., $$\forall m.\ z_{\ell,m}\in [\min(\{bounds[x].l_{\ell,m}\mid x\in B\}),\max(\{bounds[x].u_{\ell,m}\mid x\in B\})].$$ However, this results in a very high overapproximation error and is also difficult to identify a good refinement if an adversarial example is detected.
Instead, we wish to encode a disjunction over the outputs of $N_{\ell}$, thereby forcing the input to $N_{\ell+1}$ to be contained in one of them: 
$$\bigvee_{x\in B} \bigwedge_{m\in [m_{\ell}]}\left(z_{{\ell},m}\geq bounds[x].l_{\ell,m}\wedge z_{{\ell},m}\leq bounds[x].u_{\ell,m}\right)$$
However, disjunctions are not directly expressible in MILPs.
Thus, we propose a MILP encoding adapting the big-M method for expressing the maximum function~\cite{BIG_M}.
Our encoding captures a function that takes a finite set of intervals and outputs a value in one of them.
Formally, given $k$ intervals $[l_1, u_1], \ldots , [l_k, u_k]$ such that $l_i \geq 0$ for every $i \in [k]$, our encoding introduces $k$ binary variables $I_1, \ldots , I_k \in \{0, 1\}$ and a real-valued variable $y$ for the output that is contained in one of the intervals. 
Our constraints force that: (1)~exactly one interval is picked (by requiring that the sum of the binary variables is one) and (2)~if $I_i=1$, 
then $y\in [l_i,u_i]$. This is encoded by two types of constraints, each has a copy for each of the $k$ intervals.
The first type of constraints forces $y \geq l_i$ in case $I_i=1$. The second type of constraints forces $y \leq u_i$ in case $I_i=1$.
We further denote the ``big-M'' as the maximum upper bound $u_M = \max (u_1, \ldots, u_k)$.
Our encoding is:
\begin{subequations}\label{logical_disjunction_equation}
    \begin{equation}\label{logical_disjunction_1}
        \begin{gathered}
            \sum_{i = 1}^{k} I_i = 1
        \end{gathered}
    \end{equation}
    \begin{equation}\label{logical_disjunction_2}
        \begin{gathered}
            \forall i \in [k]: y \geq l_i \cdot I_i
        \end{gathered}
    \end{equation}
    \begin{equation}\label{logical_disjunction_3}
        \begin{gathered}
            \forall i \in [k]: y \leq u_i \cdot I_i + u_M \cdot (1 - I_i)
        \end{gathered}
    \end{equation}
\end{subequations}

\begin{restatable}[]{theorem}{ftc}\label{thm:or}
    \Cref{logical_disjunction_equation}
    is feasible if and only if there exists $y \in [l_i, u_i]$ for some $i\in [k]$.
\end{restatable}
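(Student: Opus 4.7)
The plan is to prove both directions of the biconditional by direct construction. The statement essentially says our big-M disjunction encoding is exact: the feasible set of (\ref{logical_disjunction_1})--(\ref{logical_disjunction_3}) projected on $y$ is precisely $\bigcup_{i\in[k]}[l_i,u_i]$. I would prove this as two short constructive lemmas.

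For the forward direction ($\Rightarrow$), I would start from any feasible assignment $(I_1,\ldots,I_k, y)$. Since each $I_i\in\{0,1\}$ and (\ref{logical_disjunction_1}) forces $\sum I_i=1$, exactly one index, call it $i^*$, satisfies $I_{i^*}=1$ while $I_i=0$ for $i\neq i^*$. Plugging these values into (\ref{logical_disjunction_2}) for $i=i^*$ yields $y\geq l_{i^*}$, and plugging them into (\ref{logical_disjunction_3}) for $i=i^*$ gives $y\leq u_{i^*}\cdot 1 + u_M\cdot 0 = u_{i^*}$. Hence $y\in[l_{i^*},u_{i^*}]$.

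For the backward direction ($\Leftarrow$), I would assume $y^\star\in[l_{i^\star},u_{i^\star}]$ for some $i^\star\in[k]$ and exhibit the witness assignment $I_{i^\star}=1$, $I_i=0$ for $i\neq i^\star$, and $y=y^\star$. Checking the constraints: (\ref{logical_disjunction_1}) is immediate. For (\ref{logical_disjunction_2}) with $i=i^\star$ the right-hand side is $l_{i^\star}\leq y^\star$; for $i\neq i^\star$ it is $0$, and here I must invoke the hypothesis $l_j\geq 0$ for all $j$, so that $y^\star\geq l_{i^\star}\geq 0$. For (\ref{logical_disjunction_3}) with $i=i^\star$ the right-hand side is $u_{i^\star}\geq y^\star$; for $i\neq i^\star$ it is $u_M\geq u_{i^\star}\geq y^\star$ by the definition of $u_M=\max_j u_j$.

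I do not expect a genuine obstacle here: both directions are direct substitutions into linear inequalities, with the only two subtle points being that (i)~the non-negativity hypothesis $l_i\geq 0$ is precisely what keeps the inactive rows of (\ref{logical_disjunction_2}) vacuous when $I_i=0$, and (ii)~the definition $u_M=\max_j u_j$ is precisely what keeps the inactive rows of (\ref{logical_disjunction_3}) vacuous. I would flag these two uses of the assumptions explicitly so the reader sees that the constants are chosen tightly for the encoding to be both sound and complete.
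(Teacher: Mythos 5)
Your proposal is correct and follows essentially the same argument as the paper's proof: both directions proceed by identifying the unique index with $I_i=1$, substituting into the constraints, and using $l_i\geq 0$ and $u_M=\max_j u_j$ to make the inactive rows vacuous. No substantive difference.
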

\ifthenelse{\EXTENDEDVER<0}{\Cref{sec:proof}}{\citet[Appendix B]{Tzour25}}
shows the proof.
\sloppy
\tool uses this encoding to bound the output of $N_\ell$.
It introduces $k=|B|$ binary variables $I_1,\ldots,I_k$, and then, for each $m\in [m_\ell]$, it adds the above constraints for $y=z_{\ell,m}$ and the intervals $[bounds[x].l_{\ell,m},bounds[x].u_{\ell,m}]$ for every $x\in B$.  
It also relies on real-valued bounds $L_{i,m}$ and $U_{i,m}$ for all $i\in \{\ell+1,\ldots,L\}$ and $m\in [m_i]$, which are computed as described before, by solving the optimizations $L_{i,m}=\min{z'_{i,m}}$ and $U_{i,m}=\max{z'_{i,m}}$ over all constraints of layers $\ell,\ldots,i-1$.
Overall, given a batch $B$ with $k$ inputs and their bounds $[l^j_{\ell,m}, u^j_{\ell,m}]$ for every $j\in[k],m\in [m_\ell]$, the batch verification is encoded by \texttt{MILP\_{SUF}}:
%
% %%% indexes legend %%%
% i - layer index
% j - sample in batch, k - number of samples in batch
% m - ReLU neuron index, m_i - number of relus in layer i
% z' - before relu, z - after relu

\begin{subequations}\label{batch_encoding}

    \begin{equation}\label{batch_encoding_1} % or booleans
        \begin{gathered}
 \forall j \in [k]:\ I_j \in \{0,1\}, \quad  \sum_{j = 1}^{k} I_j = 1
        \end{gathered}
    \end{equation}
    \begin{equation}\label{batch_encoding_2} % or upper bounds
        \begin{gathered}
            \forall j \in [k], \forall m \in [m_{\ell}]: \quad z_{\ell,m} \geq l^{j}_{\ell,m} \cdot I_j, \quad  z_{\ell,m} \leq u^{j}_{\ell,m} \cdot I_j + u_{M,m} \cdot (1 - I_j)
        \end{gathered}
    \end{equation}

    \begin{equation}\label{batch_encoding_4} % affine transformation
        \begin{gathered}
            \forall i > \ell, \forall m \in [m_i]: \quad  z'_{i,m} = b_{i,m} + \sum_{m' = 1}^{m_{i-1}} w_{i,m,m'} \cdot z_{i-1,m'}
        \end{gathered}
    \end{equation}

    \begin{equation}\label{batch_encoding_6} % relu - active
    \forall i > \ell, \forall m \in [m_i] 
    \begin{cases}
    z_{i,m} = z'_{i,m}            & L_{i,m} \geq 0 \\%[1ex]
    z_{i,m} = 0 & U_{i,m} \leq 0\\
     \!\begin{aligned}%[b]
      z_{i,m} \geq 0;\quad z_{i,m} \geq z'_{i,m}; \quad z_{i,m} \leq U_{i,m} \cdot a_{i,m}; \\
       z_{i,m} \leq z'_{i,m} - L_{i,m} \cdot (1 - a_{i,m}); \quad a_{i,m} \in \{0,1\}
    \end{aligned} 
     & \text{else}
  \end{cases}
    \end{equation}
    \begin{equation}\label{batch_encoding_7} % or booleans
        \begin{gathered}
          z_{L,c} \leq \max_{c' \neq c} z_{L,c'} 
        \end{gathered}
    \end{equation}

\end{subequations}
where the $\texttt{max}$ function encoding is defined formally in~\citet{MIPVERIFY}.
\Cref{algo::milpbatch} shows the generation of this MILP. It begins by generating the disjunction over the outputs of $N_{\ell}$ (\Cref{line:or}). Then, for every layer $i$, it iterates the neurons and, for each, computes a lower and an upper bound on the affine function (Lines~\ref{line:lower1}--\ref{line:upper1}) by calling a MILP solver with the constraints up to layer $i-1$. Then, it adds the constraints of layer $i$ (\Cref{batch_encoding_4,batch_encoding_6}) and continues to the next layer.

 \begin{algorithm}[t]
    \caption{MILPBatch($N_L\circ \ldots \circ N_\ell$, $\{[l^j_{\ell,m},u^j_{\ell,m}]\mid m\in [m_\ell],j\in [k]\}$, $c$)}
    \label{algo::milpbatch}

    $L=[]$; $U=[]$\;
    constraints = MILP\_OR$(\{[l^j_{\ell,m},u^j_{\ell,m}]\mid m\in [m_\ell],j\in [k]\})$ \tcp*{\Cref{batch_encoding_1,batch_encoding_2}}\label{line:or}
    \For{$i=\ell+1; i\leq L; i++$}{
        
        \For{$m=1; m<m_i; m++$}{
        $L_{i,m}$
            = MILPSolve($\min b_{i,m} + \sum_{m' = 1}^{m_{i-1}} w_{i,m,m'} \cdot z_{i-1,m'}$ subject to constraints)\;\label{line:lower1}
            $U_{i,m}$
            = MILPSolve($\max b_{i,m} + \sum_{m' = 1}^{m_{i-1}} w_{i,m,m'} \cdot z_{i-1,m'}$ subject to constraints)\;\label{line:upper1}
        }
        constraints = constraints $\cup$ MIPVerify\_constraints($N_i$, $L_i$, $U_i$)\tcp*{(\ref{batch_encoding_4}) and (\ref{batch_encoding_6})}\label{line:mipvr1}
    }
    \Return{
    $\texttt{constraints}\cup  \{z_{L,c} \leq \max_{c' \neq c} z_{L,c'}\}$}
\end{algorithm}

\paragraph{Refinement}
We next explain how \tool performs refinement, in case \texttt{MILP\_SUF} is feasible. 
This failure can arise either from a genuinely non-robust $\epsilon$-ball or from a spurious counterexample caused by excluding the constraints of the first layers $N_1,\ldots,N_\ell$.
If \texttt{MILP\_SUF} over a batch $B$ is feasible, by \Cref{thm:or}, the assignment identifies $x\in B$ (where $I_x=1$) and $y\in bounds[x][\ell]$ that is an adversarial example (i.e., $\text{argmax}(N_{L}\circ\ldots \circ N_{\ell+1}(y)) \neq c$). 
For this $x$, \tool performs refinement: it runs MIPVerify on the entire network $N$ and $B_\epsilon^\infty(x)$ (\Cref{line::refine}).
Note that refinement of $B$ to larger sets containing $x$ is not viable, because an adversarial example is found within the bounds of $x$, independently of the bounds of the other inputs in the batch (because of our disjunction encoding).
To expedite MIPVerify, \tool passes it the bounds for $x$ up to layer $\ell$.

\subsection{Constructing Batches by Hierarchical Clustering}\label{sec:cluster}
In this section, we describe our binary tree used for constructing the batches (in \Cref{line::search_batch}).

\paragraph{Goal} The goal of the binary tree is to provide an efficient approach to construct a batch in every iteration. To boost the batch verification, the inputs in the batch should have $\epsilon$-balls whose network's computations are as similar as possible.
This is because the closer the computations, the smaller the bounds of the last layers whose analysis is joined and the fewer ReLUs that become unstable in the batch.
However, identifying the $\epsilon$-balls with the closest computations requires analyzing each $\epsilon$-ball separately, defeating the purpose of the batch verification. 
Instead, as described before, we approximate their closeness by the Hamming distance of the activation patterns of the inputs at the center of the $\epsilon$-balls. 
  
\paragraph{Activation patterns}
The activation pattern (AP) of an input $x$ is a boolean vector whose size equals the number of ReLU neurons in the network. An entry $i$ in the vector is $1$ or $0$ depending on whether the respective neuron is active (i.e., its input is positive). Formally:
\begin{definition}[Activation Patterns]~\label{def:ap}
    Given a neural network $N$ with $n$ ReLU neurons and an input $x$, the \emph{activation pattern} of $x$ through $N$ is a boolean vector $r^x \in \{0, 1\}^n$, where $r^x_i$ is the state of the $i^\text{th}$ ReLU when $x$ fed into $N$.
    If the state is active (i.e., the input of the neuron $i$ is positive given $x$), $r^x_i=1$, otherwise, $r^x_i=0$.
\end{definition}

Unlike image similarity metrics, AP depends on the network. Thus, it captures the similarity of inputs with respect to how the network perceives them.
We rely on the \emph{Hamming distance} for measuring the distance of APs. 
Formally, the distance of two inputs is the number of different bits in their activation patterns: $dist(x,y)=|\{j \in [n]\mid r^x_j\neq r^y_j\}|$. 
The lower the Hamming distance of two APs over inputs $x$ and $y$, the fewer ReLU neurons that have distinct states. Consequently, the fewer unstable neurons that stem from grouping these inputs' $\epsilon$-balls and the lower the batch verification's complexity.

\paragraph{H-Cluster}
Given the activation patterns of the inputs in $S$, \tool clusters the inputs using \emph{Hierarchical Clustering (H-Cluster)} with the complete-linkage criteria.
H-Cluster greedily constructs a \emph{dendrogram}, a diagrammatic representation of the cluster hierarchy.
H-Cluster begins by forming 
a pairwise-distance matrix $D_{|S| \times |S|}$ of the inputs in $S$ by the Hamming distance of their AP (i.e., $D[x,y]=dist(x,y)$).
Then, H-Cluster builds the dendrogram bottom-up.
Initially, it forms a list of clusters, each contains one input.
At each step, it merges the clusters with the minimal distance
(follows by the complete linkage criteria).
The distance of two clusters is the maximal distance of any two inputs in the clusters: $dist(A,B) = \max_{x \in A, y \in B} dist(x,y)$. 
Figure \ref{fig::dendrogram} exemplifies a dendrogram.

\paragraph{Binary tree} Given the dendrogram over the activation patterns, \tool constructs a binary tree $\mathcal{T}$ over the inputs.
The tree enables it to construct a batch of up to a given size and remove a batch with logarithmic complexities in the size of the input set $S$, on average.
For every activation pattern $r^x$ in the dendrogram, \tool introduces a respective leaf node labeled by the input~$x$.
For every split in the dendrogram, \tool introduces an inner node, and the relation between the nodes follows exactly the structure of the dendrogram. Every inner node is marked by the number of leaves in its subtree. This enables \tool to easily construct a cluster up to a certain size.      
The number of nodes in $\mathcal{T}$ is $O(2|S|)$ and its depth ranges between $O(log_2(|S|))$ and $O(|S|)$.
The lower bound is obtained when at every iteration of H-Cluster, all clusters are merged with some cluster, resulting in a full and complete binary tree.
The upper bound is obtained when the first iteration of H-Cluster merges two inputs, and afterwards every iteration merges the largest cluster with a singleton cluster.
Figure \ref{fig::binary_tree} illustrates a binary tree constructed by \tool.
We next describe how \tool constructs a batch and how it removes a batch.
To form a batch of up to size $k$, \tool runs a pre-order traversal from the root of $\mathcal{T}$. When it reaches a node whose number of leaves is at most $k$, it forms a batch that consists of all its leaves (by continuing the pre-order traversal) and returns it.
Pruning a batch is obtained by removing the inner node that \tool used to construct the batch and updating the batch sizes in each node along the path back to the root.
Thus, this operation's average complexity is logarithmic in the tree size, which is $|S|$.

\subsection{Adaptive Selection of Batch Size via a Multi-Armed Bandit}\label{sec:mab}
In this section, we describe how \tool learns the optimal batch size (\Cref{line::recommend}).
This step provides another advantage of verification of a large set of $\epsilon$-balls: not only \tool can scale the analysis using mini-batch verification but also if the set $S$ is large it can dynamically learn the optimal mini-batch size. In particular, it may begin from larger mini-batches, for the relatively close inputs in $S$, and as the inputs become farther apart, it can dynamically reduce the mini-batch size. We next describe the mechanism for predicting the batch size via a multi-armed bandit (MAB) agent and how it leverages the verification of previous batches to predict the next batch size.

\paragraph{Multi-armed bandit} The multi-armed bandit (MAB) is a reinforcement learning problem where an agent iteratively selects an arm from a fixed set of arms $\{1,\ldots,K\}$. 
Each arm has an unknown distribution for reward.
After selecting an arm, a random reward is sampled from the arm's distribution and added to the agent's total reward. 
The agent's goal is to maximize their total reward. 
During the selection process, the agent learns the distributions of the arms. Consequently, they face the known \emph{exploration-exploitation} trade-off: at each iteration the agent can choose the arm with the highest expected reward (\emph{exploitation}) or an arm that may have better rewards (\emph{exploration}).
The Thompson Sampling is a strategy for selecting the arms that effectively balances exploration and exploitation~\cite{THOMPSON_SAMPLING}. 
Specifically, we focus on the Gaussian mean-variance bandits (MVTS) algorithm proposed by~\citet{THOMPSON_MVTS}, in which the arms' distributions are Gaussian.
In this case, the Thompson Sampling solves the Risk-Averse MAB problem, namely it balances reward maximization and variance minimization (via a risk tolerance factor $\rho$ added to the agent's goal).
In each iteration, the agent samples from the arms' distributions and selects the arm that optimizes the mean-variance objective function.
Given the reward, it updates the arms' distributions.

\begin{figure*}[t]
    \centering
    \includegraphics[width=0.9\linewidth, trim=160 370 190 0, clip, page=13]{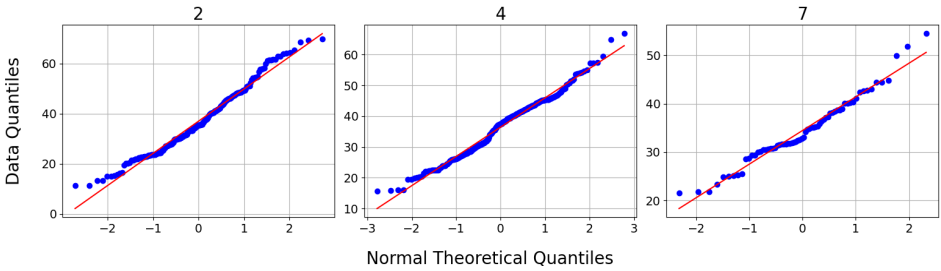} % left bottom right top
    \caption{A quantile-quantile plot comparing the empirical velocity distributions for batch sizes 2, 4, and 7 against the theoretical Gaussian distributions with the same mean and variance.}
    \label{fig::qq_plot_mab}
\end{figure*}

\paragraph{MAB for batch sizes} 
We define an arm for every mini-batch size up to \texttt{MAX\_BATCH\_SIZE}, which is a hyper-parameter. After the MAB agent selects an arm $k$, \tool constructs a batch of up to that size and verifies it. We define the reward as the velocity of this batch. 
Recall that velocity is distance divided by time. 
We define a batch $B$'s distance as the number of inputs in $B$
whose $\epsilon$-balls are analyzed jointly (i.e., the inputs whose robustness state is updated in~\Cref{line::robust_b}). 
The batch's time is the total runtime invested in proving the batch, i.e., the bound computation for all $x\in B$ from layer $N_1$ to layer $N_{\ell}$, plus the time of the batch verification. This time excludes the refinement time (i.e., the time spent to certify individual $\epsilon$-balls who failed during the batch verification), because it is independent of the batch effort, and we account for those failures in the distance calculation.
Our MAB leverages MVTS, since in practice the batch velocity of every batch size is approximately Gaussian-distributed.
\Cref{fig::qq_plot_mab} shows a quantile-quantile plot demonstrating that 
the velocity distributions are close to Gaussian, for different batch sizes, on an MNIST fully connected classifier with five hidden layers of 100 neurons each.

\paragraph{Unifying arms}
The higher the number of arms, the longer it takes for MAB to converge to the arms' distributions. 
To expedite its convergence, we partition the arms into \emph{buckets}. 
Each bucket consists of values $\{i,\ldots, i+\texttt{BUCKET\_SIZE}-1\}$. If an arm is selected, the maximal value in the bucket is used as the predicted batch size. If a batch of size $k$ is verified, its velocity is used for updating the distribution of the arm containing this value $k\in \{i,\ldots,i+\texttt{BUCKET\_SIZE}-1\}$. 
For example, for $\texttt{MAX\_BATCH\_SIZE}= 8$ and $\texttt{BUCKET\_SIZE} = 2$, there are four arms: $\{1, 2\}, \{3, 4\}, \{5, 6\}, \{7, 8\}$. If MAB selects the arm $\{5, 6\}$, it recommends using a batch size of $6$. If after this recommendation, \tool builds a batch of size $3$, this batch's velocity is used to update the distribution of the arm $\{3,4\}$.

\subsection{Choosing The Layer for Batch Verification}\label{sec:split}
We next explain how \tool chooses the layer $N_\ell$ for the batch verification.
Selecting $N_\ell$ is challenging due to the scalability-precision trade-off introduced by our batch verification, which is faster but adds overapproximation error. Although the batch's input layer $N_\ell$ does not add overapproximation error (due to our disjunction encoding), the following layers exhibit overapproximation error, because the real-valued bounds of every neuron consider all the batch's $\epsilon$-balls.  
The further the layer from $N_\ell$, the higher the overapproximation error.
This may suggest to favor $N_\ell$ closer to the output layer.
However, \tool analyzes each $\epsilon$-ball separately up to layer $N_\ell$.
Thus, the closer  $N_\ell$ to the output layer the lower the reduction in the overall analysis time compared to one by one verification.
This may suggest to favor $N_\ell$ closer to the input layer.

For convolutional neural networks, we choose $N_{\ell}$ as the last convolutional layer (before the fully connected layers). The motivation is that the output of this layer tends to be sufficiently discriminative across different classes.
Also, computing the bounds of convolutional layers is relatively fast, since their neurons get as input only part of the neurons in the previous layer.

For fully connected networks, we have not observed a single global layer that is effective for splitting.
As known, the goal of neurons in neural networks is to extract from previous neurons the information relevant for the classification. 
The better their extracted information the tighter the bounds.
For some networks,  the bounds at earlier layers are sufficiently tight for batch verification.
Thus, splitting in an early layer enables \tool to prove robustness. 
For other networks, splitting in an early layer leads to spurious counterexamples and triggers more refinements.
To estimate the best $N_\ell$, we rely on initialization via sampling~\cite{RANDOM_SEARCH_HYPERPARAMETER}. For each layer $l$, we estimate its effectiveness by sampling an input from $S$ and verifying its  $\epsilon$-ball when $\ell=l$. We define $\ell$ as the layer with the minimal analysis time.
Our estimate considers batches of size $k=1$ to eliminate the influence of the similarity of the inputs in the batch.
Our approach has several advantages. First, it is computationally efficient, since it focuses on batches of size one and relies on the verification of only $L-1$ $\epsilon$-balls. Second,  
it advances the task of group verification, since it determines the robustness status of the sampled inputs.  
Third, it does not rely on an offline mechanism or hyper-parameter tuning, which may not generalize well to an unseen network.
Our estimate approach is independent of \tool's analysis and can be improved by other mechanisms, e.g., online learning of automated reasoning strategies for a set of similar problems~\cite{SETS_ONLINE_LEARNING}. 

\begin{algorithm}[t]
    \caption{learnSplitLayer($N$, $S$, $c$, $\epsilon$, is\_robust)}
    \label{algo::split_learning}
    \DontPrintSemicolon

        $\text{layers\_times} = []$ \tcp*{Dictionary mapping layers to runtimes}
        \For{$l \in \{1, \ldots , L-1\}$} {
            $x$ = $\text{uniform}(S)$ \tcp*{An input sample}
            $\text{start\_time} = \text{current\_time}()$ \;
            $\text{bounds}$ = \text{MILPBounds}($N_l\circ\ldots \circ N_1$, $x$, $\epsilon$) \tcp*{Compute bounds up to $N_l$}\label{line::splitb}
            $\text{cex}$ = \text{MILPSolve}(\text{MILPBatch}($N_L\circ \ldots \circ N_{l+1}$, $\{\text{bounds}[l]\}$, $c$)) \tcp*{Verify the rest} \label{line::splite}
            \lIf(\tcp*[f]{Refine}){$\text{cex}\neq \bot$}{$\text{cex}$ = \text{MIPVerify}($N$, $x$, $\epsilon$, $\text{bounds}$)}\label{line::splitref}
            $\text{is\_robust}[x]$= $\text{cex} == \bot$? Robust : Non-Robust \;
            $\text{layers\_times}[l]$ = $\text{current\_time}() - \text{start\_time}$ \;
            $S = S \setminus \{x\}$
        }
        $\ell$ = $\argmin (\text{layers\_times})$ \tcp*{Choose the layer with the shortest runtime}
        \Return{$\ell$, $S$, is\_robust}

\end{algorithm}

\Cref{algo::split_learning} shows how \tool picks the layer to split. It takes as input the network $N$, the set of inputs $S$, the class $c$, the $\epsilon$, and the dictionary \texttt{is\_robust}.  It maintains a dictionary \texttt{layers\_times} mapping a layer to its analysis time for a single $\epsilon$-ball of a sampled input. 
For each layer, \tool samples an input $x$ and verifies its $\epsilon$-ball when splitting in this layer (\Cref{line::splitb}--\Cref{line::splite}). If a spurious counterexample is discovered, it analyzes the $\epsilon$-ball without splitting (\Cref{line::splitref}). 
Then, it updates the status in \texttt{is\_robust} and removes $x$ from $S$.
Lastly, it returns the layer with the minimal runtime.

\subsection{Complexity Analysis of \tool}\label{complexity}

In this section, we analyze the asymptotic complexity of our approach. We begin with the analysis time of a single batch verification, followed by the overall complexity analysis of \tool. 

\paragraph{Batch verification}
The verification of a batch $B$ of size $k$ includes (1)~computing the bounds of each input in $B$ up to layer $\ell$ (\Cref{line::prefix_boundsb}--\Cref{line::prefix_boundse}), (2)~the batch verification (\Cref{line::verify_batch_mid}), and (3)~the verification of $\epsilon$-balls suspected as not robust (\Cref{line::refine}).
The asymptotic time complexity of a MILP is exponential in the number of binary variables. In our setting, this number is the sum of the ReLU neurons, $k$ (for the disjunction,~\Cref{batch_encoding_1}) and $d_{out}-1$ (for checking if class $c$ might not have the maximal score,~\Cref{batch_encoding_7}).
The complexity is thus $T(B)=O(k \cdot 2^{\sum_{i=1}^{\ell} m_i} + (2^{k+d_{out}+\sum_{i=\ell+1}^L m_i})+r \cdot 2^{d_{out}+\sum_{i=1}^L m_i})$, where $m_i$ is the number of ReLU neurons in layer $i$ and $r$ is the number of $\epsilon$-balls suspected as non-robust.
We remind that verifying a batch after removing an $\epsilon$-ball that is suspected as non-robust (\Cref{line:addconssolve}) does not incur overhead.
For comparison, the asymptotic time complexity of verifying the $\epsilon$-balls one by one with MIPVerify is $O(k\cdot 2^{d_{out}+\sum_{i=1}^L m_i})$.
If $r=0$, \tool reduces MIPVerify's complexity by a factor of $k \cdot 2^{\sum_{i=1}^{\ell} m_i}$, which is added as an additive term, and multiplies by $2^k$ (which is independent on the network size).

\paragraph{\tool's complexity}
\tool begins by passing the inputs in $S$ through $N$, computing their activation patterns, and storing them in a dictionary.
The time complexity is negligible (compared to our analysis) and the memory complexity is  $O(|S|)$. 
The H-cluster incurs a runtime complexity of $O(|S|^3)$ and the resulting binary tree has a size of $O(2|S|)$.
The time complexity of operations on this tree is $O(log|S|)$ on average and $O(|S|)$ in the worst-case.
The time complexity of the operations on the MAB agent depends on the number of arms. Since it is a very small number, the time complexity is $O(1)$.
The dominant factor in \tool's runtime is the verification (known to be NP-hard~\cite{RELUPLEX}).
Let $B_1,\ldots,B_p$ be all batches in \tool's run, their time complexity is $\sum_{i=1}^p T(B_i)$.

\section{Evaluation}
\label{sec:evaluation}

In this section, we present the experimental results of our approach. We begin by discussing our implementation and evaluation setup. We then describe our experiments showing that: (1)~\tool expedites the approach of verifying local robustness $\epsilon$-ball by $\epsilon$-ball on average by 2.3x and up to 4.1x, (2)~\tool determines robustness for all inputs, whereas shared certification verification~\cite{SHARED_CERTIF_ARXIV} determines robustness only for 63\% of the inputs and shows a lower maximal speedup~(1.21x), (3)~the more inputs in the set, the higher the speedup in the analysis time per input: 100 inputs already enable a 2x speedup, and (4)~learning the optimal mini-batch size by our multi-armed bandit boosts the performance of \tool by 2.5x.

\paragraph{Implementation}
We implemented \tool in the Julia programming language (version 1.11.1), as a module wrapper for MIPVerify~\cite{MIPVERIFY}.
We extended MIPVerify to support our refinement, where some of the bounds are given and need not be computed.
We used Gurobi (version 12.0.1)~\cite{GUROBI} as the MILP solver.
For the batch verification (\Cref{line::verify_batch_mid,line:addconssolve}) and MIPVerify (\Cref{line::refine}),
we set Gurobi's MIPFocus flag to 1 to guide it to focus on finding a feasible solution rather than the optimal one, and we set its SolutionLimit flag to 1 so that Gurobi would terminate when finding the first feasible solution.
These adaptations fit our setting since our goal is to determine whether an $\epsilon$-ball is robust or not, which translates to determining whether these MILPs are feasible or not.

\paragraph{Evaluation setup}
We conducted our experiments on Ubuntu 20.04.2 LTS OS on a dual AMD EPYC 7742 64-Core Processor server with 1TB RAM and 128 cores.
We compared \tool to MIPVerify~\cite{MIPVERIFY} on which we build our MILP encoding. Given a set of inputs, MIPVerify verifies their $\epsilon$-balls one by one. Its analysis time is the total analysis time over all $\epsilon$-balls.
Gurobi is used by \tool and MIPVerify and it is parallelized over 8 threads and has the same flags' values.
We evaluated \tool on the MNIST dataset~\cite{MNIST}, consisting of 28$\times$28 grayscale images of handwritten digits, and the CIFAR-10 dataset~\cite{CIFAR}, consisting of 32$\times$32 RGB images representing ten classes of common animals and vehicles.
For MNIST,
we adopted four network architectures from~\citet{CONV_ARCHITECTURES}: the convolutional networks ConvSmall and ConvMed, and the fully connected (FC) networks 5$\times$100 and 6$\times$100. They contain 3,604, 5,704, 500, and 600 ReLU neurons, respectively.
ConvSmall has two convolutional layers with ReLU, a fully connected layer of 100 ReLU neurons, and a fully connected layer with ten neurons for assigning the scores to each class. ConvMed is similar to ConvSmall but has slightly different padding and its first fully connected has 1000 ReLU neurons. The 5$\times$100 has five hidden layers and the 6$\times$100 network has six hidden layers, each with 100 ReLU neurons.
We trained our MNIST models using the PGD adversarial defense~\cite{PGD} with a perturbation limit of $\epsilon = 0.2$, for the convolutional networks, and a limit of $\epsilon = 0.1$, for the fully connected networks. 
For CIFAR-10, we trained a ConvMed network with 7,144 ReLU neurons using PGD with $\epsilon=0.001$.
We incorporated techniques to improve generalization and stability, including $L_1$ regularization, adaptive learning rate scheduling, and Xavier uniform weight initialization.
Training was performed using Adam~\cite{ADAM}, for 6 epochs with a batch size of 128. Additionally, we executed standard scaling to transform pixel values for improved performance and added a corresponding normalization layer when loading the models.
The natural accuracy of the networks is 96\% for MNIST ConvMed, 93\% for ConvSmall and 5$\times$100, 91\% for 6$\times$100 and 48\% for CIFAR-10 ConvMed (similar to the accuracies of the networks evaluated by~\citet{SHARED_CERTIF_ARXIV}).
For the convolutional networks, the split layer $\ell$ is the last convolutional layer. For the fully connected networks, $\ell$ is chosen by \texttt{learnSplitLayer}.
The maximum batch size is $\texttt{MAX\_BATCH\_SIZE} = 4$ for the ConvMed networks and $\texttt{MAX\_BATCH\_SIZE} = 8$ for the rest. 
The bucket size for unifying batch sizes is $\texttt{BUCKET\_SIZE} = 2$ and our MAB's $\rho$ is $100$.

\begin{table}[t]
    
    \begin{center}
    \caption{\tool vs. MIPVerify over different networks and $\epsilon$ on sets with 100 inputs of the same class.}
    \begin{tabular}{llcccccc}
        \toprule
        Dataset & Network & $\epsilon$ & $c$ &  Cert.  & MIPVerify & \tool & Speedup \\
                &        &           &       & Rate    & [hours]   & [hours] &       \\
        
        \midrule
        MNIST   & ConvMed  &  0.03 & 0 & $99 / 99$ & 27.22 & 8.00 & 3.4 \\
                &           &      & 1 & $100 / 100$ & 36.42 & 14.08 & 2.6 \\
                &           &      & 2 & $96 / 98$ & 27.61 & 9.99 & 2.7 \\
                &           &      & 3 & $95 / 98$ & 27.08 & 9.82 & 2.7 \\
                & ConvSmall & 0.05 & 0 & $95 / 98$ & 2.04 & 1.36 & 1.5 \\
                &           &      & 1 & $98 / 100$ & 2.30 & 0.96 & 2.4 \\
                &           &      & 2 & $78 / 91$ & 1.90 & 1.33 & 1.4 \\
                &           &      & 3 & $78 / 89$ & 2.07 & 1.38 & 1.5 \\
                & 5$\times$100 & 0.03 & 0 & $96 / 98$ & 13.87 & 4.92 & 2.8 \\
                &              &      & 1 & $96 / 100$ & 23.63 & 5.75 & 4.1 \\
                &              &      & 2 & $79 / 90$ & 8.80 & 5.70 & 1.5 \\
                &              &      & 3 & $78 / 93$ & 10.88 & 6.23 & 1.7 \\
                & 6$\times$100 & 0.03 & 0 & $88 / 97$ & 11.50 & 5.46 & 2.1 \\
                &              &      & 1 & $92 / 100$ & 33.85 & 12.79 & 2.6 \\
                &              &      & 2 & $75 / 88$ & 15.10 & 8.77 & 1.7 \\
                &              &      & 3 & $94 / 98$ & 14.37 & 5.43 & 2.6 \\
        \midrule
        CIFAR-10  & ConvMed & 0.001 & 0 & $37 / 39$ & 2.64 & 1.07 & 2.4 \\
                  &         &       & 1 & $61 / 62$ & 4.18 & 1.97 & 2.1 \\
                  &         &       & 2 & $31 / 31$ & 1.74 & 0.87 & 2.0 \\
                  &         &       & 3 & $39 / 41$ & 2.89 & 1.60 & 1.8 \\
        
        \bottomrule        
    \end{tabular}
        \label{tab:results_per_label}
    \quad
\end{center}
\end{table}
\begin{figure*}[t]
    \centering
    \includegraphics[width=0.85\linewidth, trim=0 270 310 0, clip, page=18]{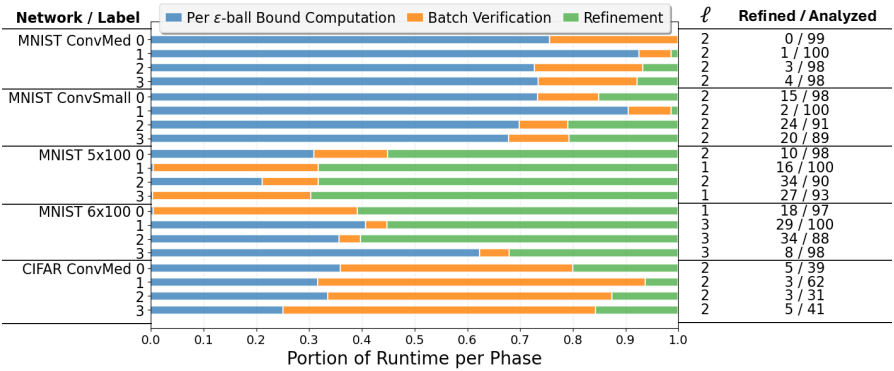} % left bottom right top
    \caption{Runtime breakdown and refinement frequency for the experiments in~\Cref{tab:results_per_label}. }
    \label{fig::breakdown}
\end{figure*}

\subsection{Performance Analysis}
In this section, we evaluate \tool's effectiveness in group verification and compare to MIPVerify.

\paragraph{Group verification on large sets}
We begin with an experiment on large sets of inputs. 
In this experiment, we consider all networks. For each, we run \tool on sets $S$ of the first 100 inputs of the same class (for several classes) and different values of $\epsilon$. 
We compare its analysis time to MIPVerify's analysis time.
We remind that both verifiers are complete (i.e., correctly determine whether an $\epsilon$-ball is robust or not).
\Cref{tab:results_per_label} shows the certification rate, the total analysis time of both approaches and \tool's speedup.
The certification rate is the number of inputs whose $\epsilon$-balls are verified as robust (by both approaches) divided by the number of correctly classified inputs.
Results show that \tool's speedup is 2.3x on average and up to 4.1x.
The highest speedup is obtained on MNIST 5$\times$100, where \tool reduces the analysis time from 24 hours to 6 hours.
\Cref{fig::breakdown} shows the runtime breakdown and refinement frequency.
It shows that most $\epsilon$-balls are verified within a batch. 
The MNIST convolutional networks have the shortest refinement phase, while the fully connected networks have the shortest bound computation and longest refinement.

\begin{table}[t]
    \small
    \begin{center}
    \caption{\tool vs. MIPVerify over different networks and $\epsilon$ on sets with the first 100 test set images.}
    \begin{tabular}{llccccc}
        \toprule
        Dataset & Network & $\epsilon$ & Cert. & MIPVerify & \tool & Speedup  \\
                &         &            & Rate  & [hours]   & [hours] &        \\
        
        \midrule
        MNIST   & ConvMed  & 0.03 & $98 / 100$ & 31.98 & 11.96 & 2.6 \\
                & ConvSmall & 0.1 & $69 / 88$ & 27.86 & 19.33 & 1.4 \\
                & 5$\times$100 & 0.03 & $84 / 92$ & 11.71 & 4.85 & 2.4 \\
                & 6$\times$100 & 0.03 & $80 / 92$ & 18.85 & 11.80 & 1.6 \\
        \midrule
        CIFAR-10  & ConvMed &  0.001 & $49 / 50$ & 4.03 & 2.43 & 1.6 \\
        
        \bottomrule        
    \end{tabular}
        \label{tab:results_entire_dataset}
    \quad
\end{center}
\end{table}

\paragraph{Group verification on small sets}
Next, we evaluate \tool on small sets of inputs. 
We consider different networks and $\epsilon$ values. For each, we ran each approach on the first 100 test set images, consisting of different classes (\tool ran separately on each class).
For such sets, \tool almost does not benefit from learning the optimal mini-batch size and does not benefit from clustering inputs with similar computations of the networks.
Thus, this experiment is challenging for \tool.
\Cref{tab:results_entire_dataset} shows the total analysis time (over all classes), for \tool and MIPVerify.
It further shows the certification rate and \tool's speedup.  
\tool achieves up to a 2.6x speedup compared to MIPVerify. MNIST ConvMed has the best speedup (and the highest certification rate, $98 / 100$).

\paragraph{Shared certification}
We next discuss the empirical differences between \tool and shared certification~\cite{SHARED_CERTIF_ARXIV}.
Shared certification relies on preprocessing (which takes multiple hours) for generating templates which can expedite the analysis on unseen $\epsilon$-balls. It focuses on incomplete verification, i.e., the robustness of $\epsilon$-balls can remain unknown.
For example, in our experiment over sets with the first 100 test set images, for ConvSmall and $\epsilon = 0.1$,
\tool precisely determines the certification rate, which is $69/88$ (78\%).
However, verification that relies on the DeepZ abstract domain~\cite{DEEPZ}, like shared certification,
can only prove that 56 $\epsilon$-balls are robust (certification rate of 63\%).
Further, the highest speedup of shared certification over DeepZ (on which it builds) is 1.21x, while \tool's speedup over MIPVerify is at least 1.4x and up to 2.6x.

\paragraph{Analysis time per $\epsilon$-ball}

\begin{figure*}[t]
    \centering
    \includegraphics[width=0.9\linewidth, trim=150 350 180 0, clip, page=12]{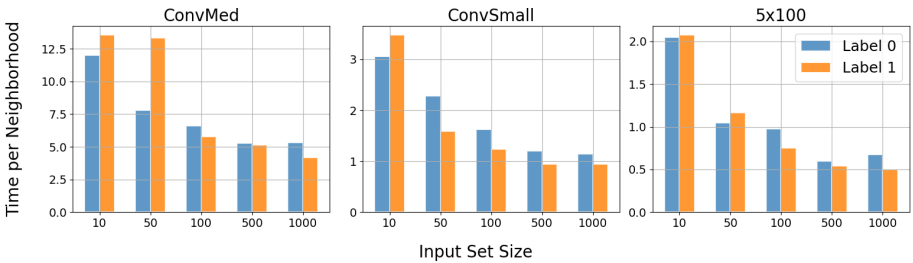} % left bottom right top
    \caption{Verification time per $\epsilon$-ball (in seconds) for different set sizes and the first two classes of MNIST.}
    \label{fig::larger_sets}
\end{figure*}
Next, we show that as the input set size increases, the average analysis time per $\epsilon$-ball decreases. 
This experiment shows the effectiveness of clustering inputs in batches, especially for larger sets where the likelihood of finding inputs with similar network computations increases. Additionally, the larger the input set, the better our MAB agent in predicting optimal mini-batch sizes.
In this experiment, we focus on the MNIST networks and $\epsilon = 0.00001$. We consider the first two classes of MNIST and for each we construct input sets of different sizes from 10 to 1000 (for class 0, up to 980, which is its test set's size).
\Cref{fig::larger_sets} shows the average analysis time per $\epsilon$-ball as a function of the size of the set.
Results show that, compared to the set with 10 inputs, \tool obtains up to a 4x speedup for 5$\times$100, up to a 3.5x speedup for ConvSmall and up to a 3.3x speedup for ConvMed, for the set with 1000 inputs.
In fact, for all networks, a 2x speedup is observed already for 100 inputs.
This shows the effectiveness of our batches and learning.

\begin{table}[t]
    \small
    \begin{center}
    \caption{\tool vs. a variant that randomly chooses the mini-batch sizes. The times of the random variant are averaged over three repetitions.
    The input set is the full test set of MNIST's first class.}
    \begin{tabular}{llcccc}
        \toprule
        Dataset & Network & $\epsilon$ & \tool [m] & \tool w/ random size [m] & Speedup\\

        \midrule
        MNIST & ConvMed  & 0.00001 & 142.55 & 639.73 & 4.5 \\
              & ConvSmall & 0.00001 & 35.79 & 53.64 & 1.5 \\
              & 5$\times$100 & 0.00001 & 11.02 & 18.15 & 1.6 \\
        \bottomrule
    \end{tabular}
    \label{tab:mab_random}
    \quad
\end{center}
\end{table}

\subsection{Ablation Study}
In this section, we show the effectiveness of \tool's components.

\paragraph{MAB effectiveness}
We study the importance of learning the mini-batch size using a multi-armed bandit (MAB).
We consider a variant that randomly selects the mini-batch sizes.
We set  $\texttt{MAX\_BATCH\_SIZE}=16$ to make the learning more challenging for our MAB agent.
We run both approaches on the MNIST classifiers, where the input set is 980 test inputs of class 0.
\Cref{tab:mab_random} shows the analysis time and the speedup of \tool.
Results show that our MAB accelerates \tool's verification time by 4.5x for ConvMed, by 1.5x for ConvSmall and by 1.6x for 5$\times$100.

\begin{table}[t]
    \small
    \begin{center}
    \caption{\tool vs. a variant with a given split layer $\ell$, on sets of 50 MNIST images of the same class.}
    \begin{tabular}{lll ccccccc}
        \toprule
        Network & $\epsilon$ & $c$ & \tool [h] ($\ell$) & $\ell=1$ [h] & $\ell=2$ [h]& $\ell=3$ [h]& $\ell=4$ [h] & $\ell=5$ [h] \\
        \midrule
            5$\times$100 & 0.03 & 0 & 2.98 (1) & \textbf{2.75} & 2.94 & 3.60 & 4.62 & N/A \\ % predicted: 1, actual: 1, overhead 0.23
                         &      & 1 & 2.92 (1) & \textbf{2.71} & 4.34 & 5.36 & 7.03 & N/A \\ % predicted: 1, actual: 1, overhead 0.21
                         &      & 2 & 2.93 (1) & \textbf{2.89} & 3.33 & 3.35 & 3.63 & N/A \\ % predicted: 1, actual: 1, overhead 0.04
            6$\times$100 & 0.03 & 0 & 2.83 (1) & 2.75 & \textbf{2.72} & 3.17 & 3.59 & 4.35 \\ % predicted: 1, actual: 2, overhead 0.11
                         &      & 1 & 6.28 (3) & 7.99 & 7.16 & \textbf{6.23} & 8.56 & 10.69 \\ % predicted: 3, actual: 3, overhead 0.05
                         &      & 2 & 5.08 (1) & 5.02 & 4.76 & \textbf{4.41} & 4.59 & 5.38 \\ % predicted: 1, actual: 3, overhead 0.67

        \bottomrule
    \end{tabular}
    \label{tab:split_learning}
    \quad
\end{center}
\end{table}

\paragraph{Split layer}
We next show the effectiveness of \tool in learning the split layer $N_\ell$ (\Cref{algo::split_learning}).
We compare  to a variant that fixes $\ell$.
We evaluate on MNIST 5$\times$100 and 6$\times$100 and three input sets, each containing 50 images of the same class (0, 1 or 2).
\Cref{tab:split_learning} shows the verification time in hours and the layer that \tool chose.
The results show the importance of selecting a good $\ell$ and that the optimal $\ell$ varies between the networks and input sets.
The results also show that \tool selects the optimal split layer for most networks and classes with a overhead of 7.2 minutes, on average. Even when a sub-optimal $\ell$ is selected, the overhead does not exceed 40.2 minutes.

\begin{figure*}[t]
    \small
    \centering
    \begin{minipage}{0.48\linewidth}
        \begin{subtable}{\linewidth}
            \centering
            \caption{Analysis time on CIFAR-10 ConvMed, $\epsilon=0.001$.}
            \label{tab::similarity}
            \begin{tabular}{l ccc}
                \toprule
                $c$ & AP [m] & SSIM [m] & LPIPS [m] \\
                \midrule
                0  & 64.3 & 72.3 & 69.3 \\
                1  & 118.6 & 135.3 & 154.8 \\
                2  & 52.2 & 57.4 & 58.2 \\
                \bottomrule
            \end{tabular}
            
        \end{subtable}
        
       % \vspace{1em} % space between the two tables

        \begin{subtable}{\linewidth}
            \centering
            \caption{\tool without disjunction encoding, $\ell=0$, and a 10 hour limit on MNIST ConvSmall, $\epsilon=0.05$.}
            \label{tab::abstraction}
            \begin{tabular}{l cc}
                \toprule
                Similarity metric & \#Analyzed & \#Robust \\
                \midrule
                AP & 75 & 1 \\
                SSIM & 74 & 2 \\
                LPIPS & 76 & 2 \\
                \bottomrule
            \end{tabular}
        \end{subtable}
        \caption{\tool with different similarity metrics.}
    \end{minipage}
    \hspace{1em}
    \begin{minipage}{0.48\linewidth}
        \centering
        \includegraphics[width=0.9\linewidth, trim=0 120 470 0, clip, page=16]{images/figures.pdf}
        \captionof{figure}{The confidence of $\epsilon$-balls verified within a batch and $\epsilon$-balls that required a separate analysis.}
        \label{fig::confidence}
    \end{minipage}
\end{figure*}

\paragraph{Similarity metrics}
We next show the effectiveness of the activation pattern similarity.
We compare \tool to variants that replace the activation pattern similarity with SSIM~\cite{SSIM} and LPIPS~\cite{LPIPS} (using AlexNet activations).
We evaluate on CIFAR-10 ConvMed with an input set containing 100 inputs of the same class, for different classes, and $\epsilon=0.001$. 
\Cref{tab::similarity} shows the analysis time. It shows that the activation pattern similarity is better by 1.13x, on average.

\paragraph{Disjunction}
We next show the importance of our disjunction encoding (\Cref{logical_disjunction_equation}).
We evaluate a variant that abstracts $\epsilon$-balls in the input layer ($\ell=0$) with the minimal bounding box.
It does not refine $\epsilon$-balls, because it cannot easily identify a suspect non-robust $\epsilon$-ball.
We ran this variant with the three similarity metrics, on MNIST ConvSmall, $\epsilon=0.05$, and an input set consisting of 100 images of class 0, and 
a 10 hour limit. \Cref{tab::abstraction} reports the number of analyzed inputs (whose $\epsilon$-ball is included in a batch) and the number of $\epsilon$-balls that were proven robust.
On average, 25\% $\epsilon$-balls could not be analyzed within 10 hours, and at most two $\epsilon$-balls were proven robust. 
In contrast, \tool determines robustness for all 98 $\epsilon$-balls within 1.36 hours (\Cref{tab:results_per_label}).

\paragraph{Error case analysis}
We next show an empirical difference between $\epsilon$-balls that were proven within a batch and those that were refined.
The difference is the network's \emph{confidence} in their central input, i.e., the gap between the highest and second highest scores. %: $C(N,x,c)= N(x)_c - \max_{c'\neq c} N(x)_{c'}$.   
\Cref{fig::confidence} shows a violin plot over the confidence distribution of MNIST inputs of class $3$, for different networks, for $\epsilon$-balls that were proven in a batch (in blue) and refined (in red).
Above the red violin, we show the number of non-robust $\epsilon$-balls and the number of refined (suspected non-robust) $\epsilon$-balls. 
The figure shows that the confidence is lower for refined $\epsilon$-balls and that at least half of them are indeed non-robust.

\begin{figure*}[t]
    \centering
    \includegraphics[width=0.8\linewidth, trim=0 344 350 0, clip, page=17]{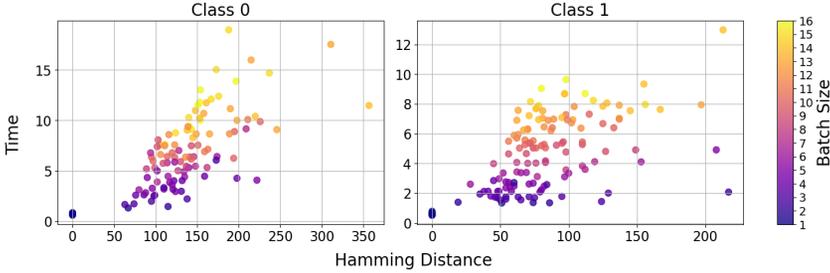} % left bottom right top
    \caption{The analysis time of \tool (in minutes) for different batches on MNIST ConvSmall, $\epsilon=0.05$.}
    \label{fig::effects_time}
\end{figure*}

\paragraph{Batch size vs. Hamming distance}
\tool constructs a batch by determining its size, with the MAB agent, and then choosing the most similar inputs. 
An alternative is to choose the most similar inputs, with a MAB agent that selects the maximal pairwise distance, and then determine the batch size.
We next show that this alternative is ineffective since the analysis time is not correlated to the maximal pairwise Hamming distance. 
We consider a variant that randomly selects a mini-batch size (up to 16) and then constructs a batch of exactly this size (if possible), to draw balanced statistics for all sizes. 
We evaluate on MNIST ConvSmall, $\epsilon=0.05$, and input sets consisting of all 980 and 1135 test set inputs of class 0 and 1.
\Cref{fig::effects_time} shows the analysis time of each batch as a function of the maximal pairwise Hamming distance. Each batch is colored by its size.  
The results show that the analysis time of every distance has a wide range, whereas batches of the same size have similar analysis time.
For example, for class 1, the analysis of batches with distance 70 takes 1-9 minutes, whereas for batches of size 8 it takes 4-6 minutes.
That is, the analysis time is related to the batch size (we remind that batches are evaluated by their velocity, which depends on the analysis time).

\section{Related Work}
\label{sec:related_work}

In this section, we discuss related work for boosting neural network verification.

\paragraph{Boosting by previous analysis}
Several neural network verifiers boost their analysis  by relying on previous analysis.
Verifiers targeting continuous verification (incremental verification) leverage the intermediate analysis results of a network to expedite the analysis of its variants obtained by further training or fine-tuning~\cite{CONTINUOUS, ONLINE_VERIF,IVAN}. 
FANC proposes \emph{proof sharing} for generating intermediate layer templates that capture the property being verified and adapt them for similar networks to expedite their verification~\cite{FANC}.
Shared certifications extend this concept to boost the analysis on unseen $\epsilon$-balls by generating abstract templates of intermediate analysis results during preprocessing~\cite{SHARED_CERTIF}.
DeepAbstract analyzes a set of inputs during preprocessing to identify similar neurons by their activation patterns~\cite{DEEPABSTRACT}. These neurons are abstracted when verifying $\epsilon$-balls, thereby boosting their analysis.  
Similarly, AccMILP analyzes a set of inputs during preprocessing to identify neurons with less impact on the network's accuracy~\cite{ACCMILP}. Accordingly, the verification of an $\epsilon$-ball performs linear relaxation only to these neurons.

\paragraph{Boosting by learning}
Several neural network verifiers employ learning to expedite their analysis.
\citet{TARGET_LABELS} boost existing verifiers by learning a prioritization over classes by their likelihood to be the classes of adversarial examples.
VeeP expedites local robustness verification of a network classifier in semantic feature neighborhoods by relying on active learning to partition the verification process into smaller steps~\cite{VEEP}. Similarly to \tool, it predicts the next step by computing the analyzer's velocity on previous steps.
Learning has also been proposed to expedite the analysis of verifiers in other domains.
\citet{MDP_VERIF} rely on machine learning to improve the verification of Markov decision processes (MDPs), efficiently analyzing probabilistic reachability and temporal properties without exhaustively exploring the entire state space.
\citet{MAB_HW_VERIF} reduce hardware verification efforts using a multi-armed bandit to automatically select the most promising test sequences.
\citet{SETS_ONLINE_LEARNING} learn optimal strategies for expediting the automated reasoning of a set of similar problems. They sample problem candidates, train a cost model to predict the runtime of a strategy for a given problem, and use it to dynamically select efficient strategies for future problems.

\section{Conclusion}
\label{sec:conclusions_and_discussion}

We present \tool, a verifier that analyzes the robustness of a neural network in a set of $L_\infty$ $\epsilon$-balls. 
\tool is sound and complete.
The key idea is to identify the $\epsilon$-balls for which the network has similar computations and group their analysis in a mini-batch. 
\tool relies on a multi-armed bandit to predict the optimal mini-batch size.
\tool begins the batch analysis in a middle layer and encodes the inputs to this layer precisely. 
This encoding also enables an effective refinement.
We evaluated \tool on fully connected and convolutional networks for MNIST and CIFAR-10. Experimental results show that \tool scales the verification on average by 2.3x and up to 4.1x, reducing verification time from 13 hours to 5 hours, on average.
Our results also show the importance of learning the optimal batch sizes: it scales \tool by 2.5x.

\section*{Acknowledgements}
We thank Yuval Shapira for his invaluable insights and the anonymous reviewers for their feedback.
This research was supported by the Israel Science Foundation (grant No. 2607/25).

\section*{Data-Availability Statement}
The data and source code that support the findings of this study are openly available. 
The reproducible artifact associated with this paper is described in \citet{BaVerLy-artifact}, and the implementation and experimental scripts can be accessed and reused at \url{https://github.com/Saarts21/BaVerLy}.

\bibliography{bib}
\ifthenelse{\EXTENDEDVER<0}{
\newpage
\appendix
\section{Running Example}\label{sec:runex}
We next describe a running example of \Cref{algo::batch_verif}, given an MNIST 3$\times$100 fully connected classifier with 3 hidden layers, ten MNIST images $S=\{x_1,\ldots,x_{10}\}$, class $c = 0$ and $\epsilon=0.1$.
\tool begins by identifying that all inputs are classified as $c$.
Then, it determines whether to split in layer $\ell=1$ or $\ell=2$ as follows. 
First, it samples an input $x_5$ and verifies its $\epsilon$-ball when splitting the network at layer $1$.
Then, it samples $x_3$ and verifies its $\epsilon$-ball when splitting the network at layer $2$.
Since the verification runtime of $x_3$ is shorter, it sets $\ell=2$. Both $\epsilon$-balls are robust, \tool updates their status in \texttt{is\_robust} and removes these inputs from $S$.
\tool continues by computing the activation patterns and constructing the binary tree using H-Cluster. Then, it initializes the $\mathcal{MAB}$ agent and begins iterations while the tree is not empty. In the first iteration, the $\mathcal{MAB}$ agent returns $k = 6$. \tool performs a pre-order search and returns the mini-batch $B = \{x_2, x_4, x_6, x_7, x_{10} \}$. Then, it computes the bounds up to the layer $\ell=2$ for every input in $B$.
Then, it performs a batch analysis from layer 3 till the output layer. The MILP solver returns a counterexample, where $I_{x_2}=1$. Thus, \tool checks whether $B_\epsilon^\infty(x_2)$ is robust and expedites its analysis by leveraging the already computed bounds from the input layer till layer $\ell$.
The MILP solver finds a counterexample, indicating that $x_2$ is \emph{Non-Robust}. Then, \tool removes $x_2$ from $B$, adds the constraint $I_{x_2}=0$ and calls the MILP solver again. The solver returns there is no counterexample, thus \tool determines that all inputs $\{x_4, x_6, x_7, x_{10} \}$ are robust. It then updates their status in \texttt{is\_robust} and updates the distribution of batch size 5, based on the velocity of $B$. The velocity is $\frac{4}{25}$, where the denominator is the total time of the
analysis, except for the additional analysis time of $x_2$.
Then, \tool begins another iteration. The $\mathcal{MAB}$ agent returns $k=4$. Thus, \tool adds all remaining inputs to the mini-batch $B = \{x_1, x_8, x_9\}$. \tool continues similarly to the previous iteration and identifies that all $\epsilon$-balls are robust, thus it updates their status. It updates the distribution of batch size 3 based on the velocity $\frac{3}{23}$. 
At the end of this iteration, $\mathcal{T}$ is empty, all inputs in $S$ are determined as robust or not and \tool terminates.

\section{Proof}\label{sec:proof}
\ftc*
\begin{proof}
    In the first direction, we assume that the MILP is feasible.
    Since $\forall i \in [k]: I_i \in \{0, 1\} $, \Cref{logical_disjunction_1} implies that there exists $i \in [k]$ such that $I_i = 1$, and $\forall j \neq i \in [k]: I_j = 0 $.
    From assigning the binary values in~\Cref{logical_disjunction_2}, it holds that $y \geq l_i$ and 
    for every $j\neq i.\ y \geq 0$.
    Additionally, from assigning them in~\Cref{logical_disjunction_3}, it holds that $y \leq u_i$ and for every $j\neq i.\ y \leq u_M$.  Because $l_i \geq 0$ and $u_i \leq u_M$, we obtain $y \in [l_i, u_i]$.
    
    In the second direction, we assume that there exists $y' \in [l_i, u_i]$ for some $i\in [k]$.
    Consider the assignment of $I_i = 1$, $\forall j \neq i \in [k]: I_j = 0 $, and $y = y'$.
    We show that it satisfies the above MILP.
    Clearly, \Cref{logical_disjunction_1} is satisfied.
    By the assumption, it holds that $l_i \leq y' \leq u_i$.
    Hence, $y \geq l_i \cdot 1 = l_i \cdot I_i$ and $y \leq u_i \cdot 1 + u_M \cdot 0 = u_i \cdot I_i + u_M \cdot (1 - I_i)$.
    For every $j \neq i$, we set $I_j = 0$.
    Since $y' \geq l_i \geq 0$, it follows that $y \geq l_j \cdot 0 = l_j \cdot I_j$.
    Additionally, $y' \leq u_i \leq u_M$, thus $y \leq u_j \cdot 0 + u_M \cdot 1 = u_j \cdot I_j + u_M \cdot (1 - I_j)$.
    Therefore, \Cref{logical_disjunction_2} and \Cref{logical_disjunction_3} are true for every $j \in [k]$.
\end{proof} 
}{}
\end{document}